\newcommand{\bz}{\boldsymbol{z}}
\newcommand{\bx}{\boldsymbol{x}}
\newcommand{\bu}{\boldsymbol{u}}
\newcommand{\loss}{\ell}
\newcommand{\bg}{\boldsymbol{g}}
\newcommand{\ba}{\boldsymbol{a}}
\newcommand{\be}{\boldsymbol{e}}
\newcommand{\bq}{\boldsymbol{q}}
\newcommand{\bp}{\boldsymbol{p}}
\newcommand{\bbw}{\bar{\boldsymbol{w}}}
\newcommand{\bbu}{\bar{\boldsymbol{u}}}
\newcommand{\bw}{\boldsymbol{w}}
\newcommand{\bzero}{\boldsymbol{0}}
\newcommand{\balpha}{\boldsymbol{\alpha}}
\newcommand{\bbeta}{\boldsymbol{\beta}}
\newcommand{\bdelta}{\boldsymbol{\delta}}
\newcommand{\scK}{\mathcal{K}}
\newcommand{\Ahat}{\wh{A}}
\newcommand{\bghat}{\wh{\boldsymbol{g}}}
\newcommand{\bxhat}{\wh{\bx}}
\DeclareMathOperator*{\argmin}{argmin}
\DeclareMathOperator*{\range}{range}
\DeclareMathOperator*{\mydet}{det_{+}}
\newcommand{\field}[1]{\mathbb{#1}}
\newcommand{\R}{\field{R}}
\newcommand{\E}{\field{E}}
\newcommand{\btheta}{\boldsymbol{\theta}}
\newcommand{\blambda}{\boldsymbol{\lambda}}
\newcommand{\theset}[2]{ \left\{ {#1} \,:\, {#2} \right\} }
\newcommand{\inner}[1]{ \left\langle {#1} \right\rangle }
\newcommand{\eye}[1]{ \boldsymbol{I}_{#1} }
\newcommand{\norm}[1]{\left\|{#1}\right\|}
\newcommand{\trace}[1]{\textsc{tr}({#1})}
\newcommand{\diag}[1]{\mathrm{diag}\!\left\{{#1}\right\}}
\newcommand{\defeq}{\stackrel{\rm def}{=}}
\newcommand{\sgn}{\mbox{\sc sgn}}
\newcommand{\scO}{\mathcal{O}}
\renewcommand{\ss}{\subseteq}
\newcommand{\wh}{\widehat}
\newcommand{\wt}{\widetilde}
\newtheorem{lemma}{Lemma}
\newtheorem{theorem}{Theorem}
\newtheorem{cor}{Corollary}
\newtheorem{prop}{Proposition}
\newtheorem{assumption}{Assumption}
\newcommand{\bb}{\boldsymbol{b}}
\algnewcommand\Internal{\item[{\textbf{Internal State:}}]}
\algnewcommand\Update[1]{\item[{\textbf{SketchUpdate({#1})}}]}
\algnewcommand\Init[1]{\item[{\textbf{SketchInit({#1})}}]}
\newcommand{\order}{\ensuremath{\mathcal{O}}}
\newcommand{\alg}{\textsc{OjaSENSE}\xspace}
\newcommand{\adagrad}{\textsc{AdaGrad}\xspace}
\newcommand{\sign}{\ensuremath{\mbox{sign}}}
\renewcommand{\alg}{SON\xspace}
\newcommand{\alglong}{Sketched Online Newton\xspace}
\newcommand{\ojaalg}{Oja-SON\xspace}
\newcommand{\fdalg}{FD-SON\xspace}
\newcommand{\specialcell}[2][c]{\begin{tabular}[#1]{@{}c@{}}#2\end{tabular}}
\newcommand{\scale}{\ensuremath{\gamma}}
\title{Efficient Second Order Online Learning by Sketching}
\author{
  Haipeng Luo \\
  Princeton University, Princeton, NJ USA \\
  \nolinkurl{haipengl@cs.princeton.edu} \\
  \And
  Alekh Agarwal \\
  Microsoft Research, New York, NY USA \\
  \nolinkurl{alekha@microsoft.com} \\
  \And
  Nicol\`o Cesa-Bianchi \\
  Universit\`a degli Studi di Milano, Italy \\
  \nolinkurl{nicolo.cesa-bianchi@unimi.it} \\
  \And
  John Langford \\
  Microsoft Research, New York, NY USA \\
  \nolinkurl{jcl@microsoft.com} \\  
}
\begin{document}

\maketitle

\begin{abstract}
We propose \alglong (\alg), an online second order learning algorithm
that enjoys substantially improved regret guarantees for
ill-conditioned data. \alg is an enhanced version of the Online Newton
Step, which, via sketching techniques enjoys a running time linear in
the dimension and sketch size.  We further develop sparse forms of the
sketching methods (such as Oja's rule), making the computation linear
in the sparsity of features. Together, the algorithm eliminates all
computational obstacles in previous second order online learning
approaches.
\end{abstract}

\section{Introduction}
\label{sec:intro}

Online learning methods are highly successful at rapidly reducing the
test error on large, high-dimensional datasets. First order methods
are particularly attractive in such problems as they typically enjoy
computational complexity linear in the input size.  However, the
convergence of these methods crucially depends on the geometry of the
data; for instance, running the same algorithm on a rotated set of
examples can return vastly inferior results. See
Fig.~\ref{fig:synthetic} for an illustration.

Second order algorithms such as Online Newton Step~\citep{HazanAgKa07}
have the attractive property of being invariant to linear
transformations of the data, but typically require space and update
time quadratic in the number of dimensions. Furthermore, the
dependence on dimension is not improved even if the examples are
sparse. These issues lead to the key question in our work: \emph{Can
  we develop (approximately) second order online learning algorithms
  with efficient updates?}  We show that the answer is ``yes'' by
developing efficient sketched second order methods with regret
guarantees.  Specifically, the three main contributions of this work
are:

\begin{wrapfigure}{R}{0.5\textwidth}
\centering
  \includegraphics[width=.35\textwidth]{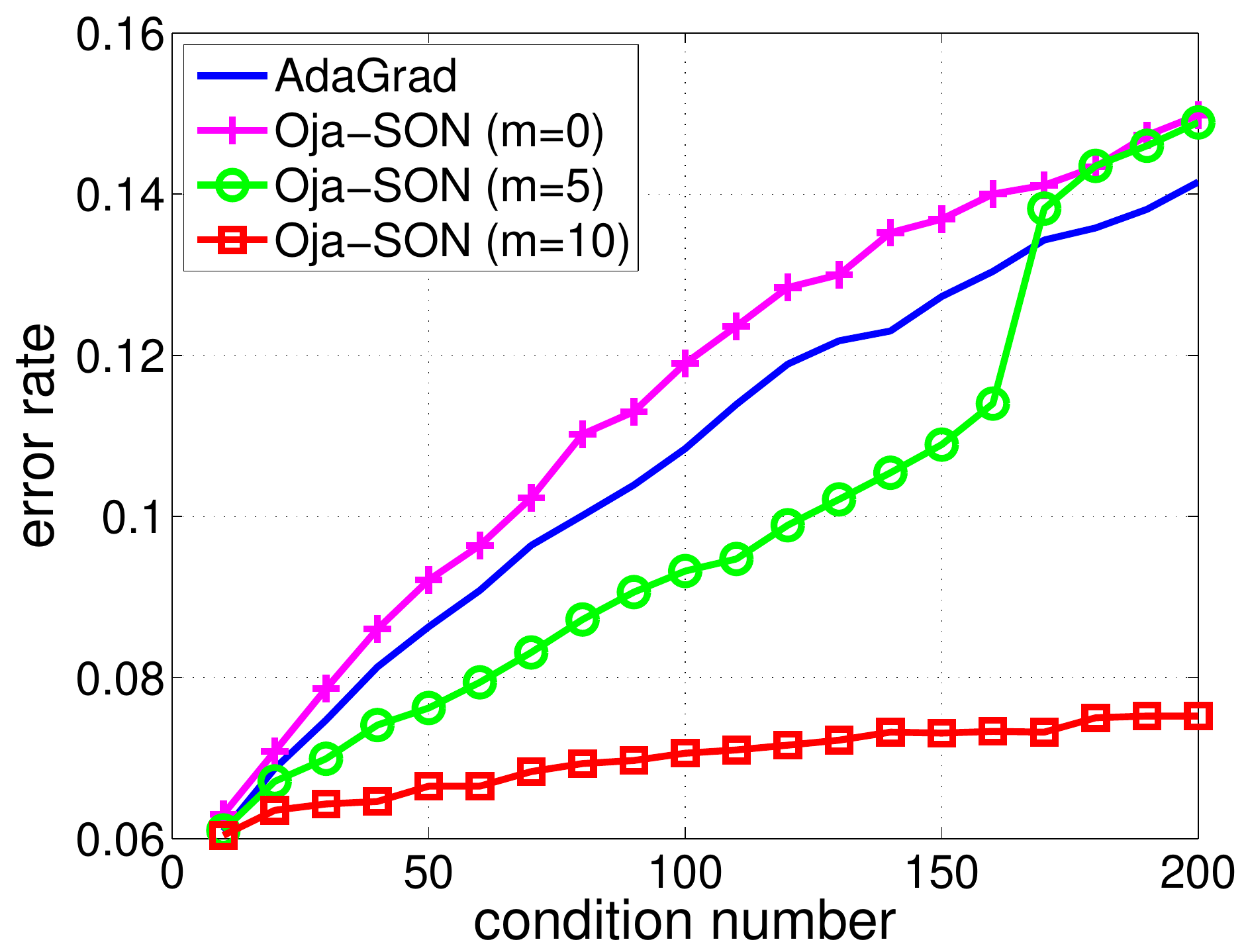}
  \caption{Error rate of \alg using Oja's sketch, and \adagrad on a
    synthetic ill-conditioned problem. $m$ is the sketch size ($m = 0$
    is Online Gradient, $m = d$ resembles Online Newton). \alg is
    nearly invariant to condition number for
    $m=10$.}\label{fig:synthetic}
\end{wrapfigure}

\paragraph{1. Invariant learning setting and optimal algorithms (Section~\ref{sec:setup}).}
The typical online regret minimization setting evaluates against a
benchmark that is bounded in some fixed norm (such as the
$\ell_2$-norm), implicitly putting the problem in a nice geometry.
However, if all the features are scaled down, it is desirable to
compare with accordingly larger weights, which is precluded by an
apriori fixed norm bound.  We study an invariant learning setting
similar to the paper~\citep{RossMiLa13} which compares the learner to
a benchmark only constrained to generate bounded predictions on the
sequence of examples. We show that a variant of the Online Newton
Step~\citep{HazanAgKa07}, while quadratic in computation, stays
regret-optimal with a nearly matching lower bound in this more general
setting.

\paragraph{2. Improved efficiency via sketching (Section~\ref{sec:sketch}).}
To overcome the quadratic running time, we next develop sketched variants of
the Newton update, approximating the second order information using a
small number of carefully chosen directions, called a \emph{sketch}.
While the idea of data sketching is widely studied~\citep{Woodruff14},
as far as we know our work is the first one to apply it to a general
adversarial online learning setting and provide rigorous regret
guarantees.  Two different sketching methods are considered: 
Frequent Directions~\citep{GhashamiLiPhWo15, Liberty13} and Oja's
algorithm~\citep{Oja82, OjaKa85}, both of which allow linear running
time per round. For the first method, we prove regret bounds
similar to the full second order update whenever the sketch-size is
large enough. 
Our analysis makes it easy to plug in other sketching and
online PCA methods (e.g.~\citep{garber2015online}).

\paragraph{3. Sparse updates (Section~\ref{sec:sparse}).}
For practical implementation, we further develop sparse versions of
these updates with a running time linear in the sparsity of the
examples.  The main challenge here is that even if examples are
sparse, the sketch matrix still quickly becomes dense.  These are the
first known sparse implementations of the Frequent
Directions\footnote{Recent work by~\citep{GhashamiLiPh16} also
  studies sparse updates for a more complicated variant of Frequent
  Directions which is randomized and incurs extra approximation
  error.}  and Oja's algorithm, and require new sparse eigen
computation routines that may be of independent interest.

Empirically, we evaluate our algorithm using the sparse Oja sketch
(called \ojaalg) against first order methods such as diagonalized
\textsc{AdaGrad}~\citep{DuchiHaSi2011, McMahanSt2010} on both
ill-conditioned synthetic and a suite of real-world datasets. As
Fig.~\ref{fig:synthetic} shows for a synthetic problem, we observe
substantial performance gains as data conditioning worsens. On the real-world datasets, we find improvements in some
instances, while observing no substantial second-order signal in the
others.

\paragraph{Related work}
Our online learning setting is closest to the one proposed
in~\citep{RossMiLa13}, which studies scale-invariant algorithms, a
special case of the invariance property considered here (see also
\citep[Section~5]{orabona2015}). Computational efficiency, a main
concern in this work, is not a problem there since each coordinate is
scaled independently. \citet{OrabonaPa15} study unrelated notions of
invariance.  \citet{GaoJiZhZh13} study a specific randomized sketching
method for a special online learning setting.

The L-BFGS algorithm~\citep{LiuNo89} has recently been studied in the
stochastic setting\footnote{Stochastic setting assumes that the
  examples are drawn i.i.d.\ from a
  distribution.}~\citep{ByrdHaNoSi14, MokhtariRi14, MoritzNiJo15,
  SchraudolphYuGu07, SohldicksteinPoGa14}, but has strong assumptions
with pessimistic rates in theory and reliance on the use of large
mini-batches empirically. Recent works \citep{ErdogduMo15,
  GonenOrSh16, GonenSh15, PilanciWa15} employ sketching in stochastic
optimization, but do not provide sparse implementations or extend in
an obvious manner to the online setting. The Frank-Wolfe
algorithm~\citep{FrankWo56, Jaggi13} is also invariant to linear
transformations, but with worse regret bounds~\citep{HazanKa12}
without further assumptions and modifications~\citep{GarberHa13}.


\paragraph{Notation}
Vectors are represented by bold letters (e.g., $\bx$, $\bw$, \dots) and
matrices by capital letters (e.g., $M$, $A$, \dots).  $M_{i,j}$ denotes the
$(i,j)$ entry of matrix $M$.  $\eye{d}$ represents the $d \times d$
identity matrix, $\bzero_{m\times d}$ represents the $m \times d$
matrix of zeroes, and $\diag{\bx}$ represents a diagonal matrix with
$\bx$ on the diagonal.  $\lambda_i(A)$ denotes the $i$-th largest
eigenvalue of $A$, $\norm{\bw}_A$ denotes $\sqrt{\bw^\top A \bw}$,
$|A|$ is the determinant of $A$, $\trace{A}$ is the trace of $A$,
$\inner{A, B}$ denotes $\sum_{i,j}A_{i,j}B_{i,j}$, and $A \preceq B$
means that $B - A$ is positive semidefinite. The sign function
$\sgn(a)$ is $1$ if $a\geq 0$ and $-1$ otherwise.

\section{Setup and an Optimal Algorithm}
\label{sec:setup}

We consider the following setting. On each round $t = 1,2\ldots, T$:
\textbf{(1)} the adversary first presents an example $\bx_t \in \R^d$,
\textbf{(2)} the learner chooses $\bw_t \in \R^d$ and predicts
$\bw_t^\top \bx_t$, \textbf{(3)} the adversary reveals a loss function
\mbox{$f_t(\bw) = \loss_t(\bw^\top \bx_t)$} for some convex,
differentiable $\loss_t: \R \rightarrow \R_+$, and \textbf{(4)} the
learner suffers loss $f_t(\bw_t)$ for this round.

The learner's regret to a comparator $\bw$ is defined as \mbox{$
  R_T(\bw) = \sum_{t=1}^T f_t(\bw_t) - \sum_{t=1}^T f_t(\bw)$}.
Typical results study $R_T(\bw)$ against all $\bw$ with a bounded norm
in some geometry. For an invariant update, we relax this requirement
and only put bounds on the predictions $\bw^\top \bx_t$. Specifically,
for some pre-chosen constant $C$ we define $ \scK_t \defeq
\theset{\bw}{|\bw^\top\bx_t| \leq C}.
$
We seek to minimize regret to all comparators that generate bounded
predictions on every data point, that is: 
$$R_T = \sup_{\bw \in \scK} R_T(\bw)~~\mbox{ where}~~ \scK \defeq
\bigcap_{t=1}^T \scK_t = \theset{\bw}{\forall t=1,2,\ldots
  T,~~|\bw^\top\bx_t| \leq C}~.$$   
Under this setup, if the data are transformed to $M\bx_t$ for all $t$
and some invertible matrix $M \in \R^{d\times d}$, the optimal $\bw^*$
simply moves to $(M^{-1})^\top \bw^*$, which still has bounded
predictions but might have significantly larger norm.  This relaxation
is similar to the comparator set considered in~\citep{RossMiLa13}.

We make two structural assumptions on the loss functions.
\begin{assumption}(Scalar Lipschitz)\label{ass:Lipschitz}
The loss function $\loss_t$ satisfies $|\loss_t^{'}(z)| \leq L$
whenever $|z| \leq C$.
\label{ass:loss}
\end{assumption}

\begin{assumption}(Curvature)\label{ass:curvature} There exists $\sigma_t \geq 0$ such that for all $\bu,
\bw \in \scK$, $f_t(\bw)$ is lower bounded by
$
 f_t(\bu) + \nabla f_t(\bu)^\top(\bw - \bu) + \frac{\sigma_t}{2}
 \left( \nabla f_t(\bu)^\top(\bu - \bw)\right)^2.
$
\label{ass:curve}
\end{assumption}
Note that when $\sigma_t = 0$, Assumption~\ref{ass:curve} merely imposes convexity. More
generally, it is satisfied by squared loss $f_t(\bw) = (\bw^\top\bx_t
- y_t)^2$ with $\sigma_t = \frac{1}{8C^2}$ whenever $|\bw^\top\bx_t|$
and $|y_t|$ are bounded by $C$, as well as for all exp-concave
functions (see~\citep[Lemma~3]{HazanAgKa07}).

Enlarging the comparator set might result in worse regret. We next
show matching upper and lower bounds qualitatively similar to the
standard setting, but with an extra unavoidable $\sqrt{d}$ factor.
\footnote{In the standard setting where $\bw_t$ and $\bx_t$ are restricted such that
$\norm{\bw_t} \leq D$ and $\norm{\bx_t} \leq X$, the minimax regret is
$\scO(DXL\sqrt{T})$. This is clearly a special case of our setting with $C = DX$.}

\begin{theorem}\label{thm:lower_bound}
For any online algorithm generating $\bw_t \in \R^d$ and all $T \geq d$,
there exists a sequence of $T$ examples $\bx_t \in \R^d$ and loss functions $\ell_t$ 
satisfying Assumptions~\ref{ass:loss} and~\ref{ass:curve} (with $\sigma_t = 0$) such that 
the regret $R_T$ is at least $CL\sqrt{dT/2}$.
\end{theorem}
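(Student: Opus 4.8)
The plan is to prove this via the probabilistic method (Yao's principle): rather than fighting an arbitrary algorithm directly, I will exhibit a \emph{randomized} distribution over example/loss sequences and show that \emph{every} fixed algorithm suffers expected regret at least $CL\sqrt{dT/2}$ against it; this forces the existence of a single deterministic realization on which the regret is at least this large. The conceptual heart is that the invariant comparator set $\scK$ is much larger than an $\ell_2$-ball, and if the examples are placed along the coordinate axes then $\scK$ becomes the full cube $[-C,C]^d$. This both decouples the problem into $d$ independent one-dimensional problems and is precisely what manufactures the extra $\sqrt{d}$ compared to the standard $\order(CL\sqrt T)$ bound (an $\ell_\infty$-ball of radius $C$ has $\ell_2$-size $\sqrt d$ larger).

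For the construction, assume $d \mid T$ (the general case is handled below) and set $n = T/d$. Partition the $T$ rounds into $d$ consecutive blocks of length $n$, and on every round $t$ in block $i$ present $\bx_t = \be_i$. Since every basis vector occurs, the constraint $|\bw^\top\bx_t|\le C$ reduces to $|w_i|\le C$ for each $i$, so $\scK = [-C,C]^d$ exactly. For the losses, draw i.i.d.\ Rademacher signs $\ve_t\in\{-1,+1\}$, set $y_t = C\ve_t$, and let $\loss_t(z) = L\,|z - y_t|$. Each $\loss_t$ is convex and nonnegative, so Assumption~\ref{ass:curve} holds with $\sigma_t=0$, and $|\loss_t'(z)| = L$ wherever differentiable, so Assumption~\ref{ass:loss} holds.

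Next I compute the expected regret. Because the losses in block $i$ depend only on the coordinate $w_i$ and the cube constraint is separable, both the algorithm's cumulative loss and $\min_{\bw\in\scK}\sum_t\loss_t(\bw)$ decompose over blocks, with the best comparator optimizing each $w_i\in[-C,C]$ independently. Fix block $i$ and write $u_t=(\bw_t)_i$. Since $u_t$ is committed before the independent symmetric sign $y_t$ is revealed, $\E\!\left[|u_t - y_t|\mid u_t\right] = \tfrac12|u_t-C|+\tfrac12|u_t+C|\ge C$, so the algorithm's expected loss in the block is at least $LnC$. On the comparator side, if $n_+,n_-$ count the two sign values then a direct computation gives $\min_{w\in[-C,C]}\sum_{t}|w-y_t| = nC - C\,|n_+-n_-|$, so the comparator's expected block loss is exactly $LnC - LC\,\E\big|\sum_t\ve_t\big|$. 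Subtracting, the expected regret incurred in each block is at least $LC\,\E\big|\sum_{t}\ve_t\big|$.

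The final step is to lower bound $\E\big|\sum_{t=1}^{n}\ve_t\big|$, and this is where the exact constant must be earned. Khintchine's inequality with the sharp $L^1$ constant gives $\E\big|\sum_{t=1}^{n}\ve_t\big|\ge \tfrac{1}{\sqrt2}\sqrt{n}=\sqrt{n/2}$; note the elementary Jensen estimate only yields an upper bound and goes the wrong way, so invoking the optimal Khintchine constant $1/\sqrt2$ is the genuinely non-routine ingredient. Each block thus contributes at least $LC\sqrt{n/2}$, and summing over the $d$ blocks yields expected total regret at least $d\cdot LC\sqrt{n/2} = CL\sqrt{dT/2}$, so some deterministic instance attains regret $\ge CL\sqrt{dT/2}$. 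Two loose ends remain, both minor: when $d\nmid T$ one uses blocks of size $\lfloor T/d\rfloor$ and absorbs the rounding into the constant, and randomized algorithms are covered by additionally taking expectation over the algorithm's internal randomness and applying Fubini, since the two sources of randomness are independent.
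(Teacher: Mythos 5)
Your construction is the paper's construction: repeat each basis vector $T/d$ times so that $\scK$ becomes the cube $\theset{\bw}{\norm{\bw}_\infty \le C}$, randomize the adversary with Rademacher signs, and close with the sharp $L^1$ Khintchine constant $1/\sqrt{2}$ (the paper likewise cites Khintchine; your remark that Jensen goes the wrong way is correct). The only real divergence is the choice of losses. The paper uses \emph{linear} losses $\ell_t(z) = \xi_t L z$: since $\bw_t$ is committed before $\xi_t$ is drawn, the learner's expected loss is exactly zero, and the expected regret collapses in one line to $L\,\E\bigl[\sup_{\bw\in\scK}\bw^\top\sum_t \xi_t\bx_t\bigr] = CL\,\E\bigl[\norm{\sum_t \xi_t \bx_t}_1\bigr]$, after which Khintchine is applied coordinate-wise. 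Your absolute-value (median-type) losses $L|z - C\ve_t|$ force two separate computations (learner at least $LnC$ per block, comparator exactly $LnC - LC\,\E\bigl|\sum_t\ve_t\bigr|$); both computations are correct, and one small thing your choice buys is that the losses are nonnegative by construction, whereas the paper needs a footnote about adding constants. Otherwise the two arguments are the same argument.

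There is, however, one genuine defect in your version: the setup requires each $\ell_t$ to be convex and \emph{differentiable}, and Assumption~\ref{ass:loss} bounds $\ell_t'(z)$ for $|z| \le C$; moreover Assumption~\ref{ass:curvature} invokes $\nabla f_t(\bu)$ at every $\bu \in \scK$. Your loss $L|z - C\ve_t|$ has its kink exactly at $z = \pm C$, i.e., at points of the form $\bw^\top\bx_t$ with $\bw$ on the boundary of $\scK$, so neither assumption is literally verifiable there. You half-acknowledge this (``wherever differentiable'') but do not repair it. The repair is easy: either switch to the paper's linear losses, or move the targets outside the constraint region, say $y_t = 2C\ve_t$, which leaves your per-block regret bound $LC\,\E\bigl|\sum_t \ve_t\bigr|$ unchanged (learner at least $2LC$ per round, comparator $2LnC - LC|n_+ - n_-|$), and then smooth the kink in a $\delta$-neighborhood of $\pm 2C$ with $\delta < C$ by a quadratic cap that dominates the absolute value. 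Comparator predictions lie in $[-C,C]$ and never enter that neighborhood, and the learner's expected loss only increases under the dominating smoothed loss, so every inequality you wrote survives verbatim and the constant is not degraded. With that patch your proof is correct; as written, it proves the theorem for a slightly different loss class than the one the statement refers to.
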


We now give an algorithm that matches the lower bound up to
logarithmic constants in the worst case but enjoys much smaller regret
when $\sigma_t \neq 0$.  At round $t+1$ with some invertible matrix
$A_t$ specified later and gradient $\bg_t = \nabla f_t(\bw_t)$, the
algorithm performs the following update \emph{before} making the
prediction on the example $\bx_{t+1}$:
\begin{equation}\label{eq:AON}
\bu_{t+1} = \bw_t - A_t^{-1}\bg_t, \quad \mbox{and} \quad
\bw_{t+1} = \argmin_{\bw \in \scK_{t+1}} \norm{\bw-\bu_{t+1}}_{A_{t}}.
\end{equation}
The projection onto the set $\scK_{t+1}$ 
differs from typical norm-based projections as it only enforces
boundedness on $\bx_{t+1}$ at round $t+1$. 
Moreover, this projection step can be performed in closed form.
\begin{lemma}
For any $\bx \neq \bzero, \bu \in \R^{d}$ and positive definite matrix
$A \in \R^{d\times d}$, we have
\[ \argmin_{\bw \,:\, |\bw^\top\bx| \leq C} \norm{\bw-\bu}_{A} = 
 \bu - \frac{\tau_C(\bu^\top\bx)}{\bx^\top A^{-1} \bx} A^{-1}\bx,
 ~~\mbox{where $\tau_C(y) = \sgn(y)\max\{|y| - C, 0\}$.} 
\]
\label{lemma:projection}
\end{lemma}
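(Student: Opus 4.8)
The plan is to work with the equivalent squared objective $g(\bw) \defeq \norm{\bw-\bu}_A^2 = (\bw-\bu)^\top A (\bw-\bu)$, since squaring does not change the minimizer. The feasible set $\scK = \{\bw : |\bw^\top\bx| \le C\}$ is a slab, i.e.\ the intersection of the two half-spaces $\bw^\top\bx \le C$ and $-\bw^\top\bx \le C$, hence convex; because $A \succ 0$, the map $g$ is strictly convex and so the minimizer is unique. First I would dispose of the trivial case $|\bu^\top\bx| \le C$: here $\bu$ itself is feasible and is the global (unconstrained) minimizer of $g$, so it is the projection, and since $\tau_C(\bu^\top\bx) = 0$ in this regime the claimed formula returns exactly $\bu$.

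The substantive case is $|\bu^\top\bx| > C$, where $\bu$ lies outside the slab. I would argue that at the optimum exactly one of the two constraints is active, namely the near face $\bw^\top\bx = c$ with $c \defeq \sgn(\bu^\top\bx)\,C$ (the face on the side of $\bu$). Reducing to this single equality constraint, I would introduce a Lagrange multiplier $\lambda$ and set the gradient of $g(\bw) - 2\lambda(\bw^\top\bx - c)$ to zero, which yields $A(\bw - \bu) = \lambda\bx$, i.e.\ $\bw = \bu + \lambda A^{-1}\bx$. Substituting into $\bw^\top\bx = c$ and using that $\bx^\top A^{-1}\bx > 0$ (as $\bx \neq \bzero$ and $A^{-1} \succ 0$), I solve $\lambda = (c - \bu^\top\bx)/(\bx^\top A^{-1}\bx)$.

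It then remains to do the sign bookkeeping that unifies the two cases. Writing $\bu^\top\bx = \sgn(\bu^\top\bx)\,|\bu^\top\bx|$ gives $\bu^\top\bx - c = \sgn(\bu^\top\bx)\,(|\bu^\top\bx| - C) = \tau_C(\bu^\top\bx)$ precisely when $|\bu^\top\bx| > C$, so $\lambda A^{-1}\bx = -\tfrac{\tau_C(\bu^\top\bx)}{\bx^\top A^{-1}\bx}A^{-1}\bx$ and the minimizer becomes $\bu - \tfrac{\tau_C(\bu^\top\bx)}{\bx^\top A^{-1}\bx}A^{-1}\bx$, matching the feasible case as well.

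The main obstacle I anticipate is justifying rigorously that only the near face is active and that the resulting equality-constrained solution is in fact globally optimal over the whole slab, rather than merely stationary on one hyperplane. The cleanest way to close this is to verify the KKT conditions directly for the candidate $\bw^\star = \bu - \tfrac{\tau_C(\bu^\top\bx)}{\bx^\top A^{-1}\bx}A^{-1}\bx$: it is feasible since $\bw^{\star\top}\bx = c$ and so $|\bw^{\star\top}\bx| = C$; stationarity holds with the multiplier computed above and a zero multiplier on the inactive far constraint; and dual feasibility holds because the KKT multiplier on the active face works out to be nonnegative in both sub-cases $\bu^\top\bx > C$ and $\bu^\top\bx < -C$ (the sign of $c - \bu^\top\bx$ is arranged correctly by the choice $c = \sgn(\bu^\top\bx)C$). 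Since $g$ is convex and all constraints are affine, the KKT conditions are sufficient for global optimality, which completes the proof.
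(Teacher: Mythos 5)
Your proof is correct and follows essentially the same route as the paper's: both dispose of the trivial case $|\bu^\top\bx| \le C$ first, then run a Lagrangian argument on the active constraint, arriving at the same multiplier $\lambda = (c - \bu^\top\bx)/(\bx^\top A^{-1}\bx)$ with $c = \sgn(\bu^\top\bx)\,C$. The only minor difference is in how optimality is certified---the paper maximizes the dual function and compares dual values across the two complementary-slackness cases (and proves a more general pseudoinverse version), whereas you verify the KKT conditions directly and invoke their sufficiency for convex problems with affine constraints---both are equally valid.
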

%

If $A_t$ is a diagonal matrix, updates similar to those of~\citet{RossMiLa13} are recovered.
We study a choice of $A_t$ that is similar to the Online Newton Step (ONS)~\citep{HazanAgKa07}
(though with different projections):
\begin{equation}
A_t = \alpha \eye{d} + \sum_{s=1}^t (\sigma_s + \eta_s)\bg_s
\bg_s^\top
\label{eqn:ons-mat}
\end{equation}
for some parameters $\alpha > 0$ and $\eta_t \geq 0$. 
The regret guarantee of this algorithm is shown below:


\begin{theorem}\label{thm:AON}
Under Assumptions~\ref{ass:loss} and~\ref{ass:curvature}, suppose that $\sigma_t \geq \sigma
\geq 0$ for all $t$, and $\eta_t$ is non-increasing.  Then using the
matrices~\eqref{eqn:ons-mat} in the updates~\eqref{eq:AON} yields for
all $\bw\in\scK$,
\begin{align*}
R_T(\bw) \le \frac{\alpha}{2}\norm{\bw}_{2}^2 + 2(CL)^2
\sum_{t=1}^T\eta_t + \frac{d}{2(\sigma + \eta_T)} \ln\left(1 +
\frac{(\sigma+\eta_T)
  \sum_{t=1}^T\norm{\bg_t}_2^2}{d\alpha}\right)~.
\end{align*}
\end{theorem}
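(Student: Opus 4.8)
The plan is to follow the standard Online Newton Step potential argument, adapted to the relaxed projection and the time-varying weights $\sigma_t + \eta_t$. First I would use the assumptions to reduce the regret to a quadratic-in-gradient expression. Applying Assumption~\ref{ass:curvature} with $\bu = \bw_t$ gives, for every $\bw \in \scK$, the per-round bound $f_t(\bw_t) - f_t(\bw) \le \bg_t^\top(\bw_t - \bw) - \frac{\sigma_t}{2}\left(\bg_t^\top(\bw_t - \bw)\right)^2$ (this already incorporates convexity through its first-order term). Summing over $t$ shows $R_T(\bw)$ is at most $\sum_t \bg_t^\top(\bw_t-\bw) - \frac{1}{2}\sum_t \sigma_t\left(\bg_t^\top(\bw_t-\bw)\right)^2$.

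Next I would control the linear term $\sum_t \bg_t^\top(\bw_t - \bw)$ through the update~\eqref{eq:AON}. Since $\bw \in \scK \subseteq \scK_{t+1}$ and $\bw_{t+1}$ is the $A_t$-projection of $\bu_{t+1}$ onto $\scK_{t+1}$, the Pythagorean inequality for projections gives $\norm{\bw_{t+1}-\bw}_{A_t}^2 \le \norm{\bu_{t+1}-\bw}_{A_t}^2$. Expanding the right-hand side using $\bu_{t+1} = \bw_t - A_t^{-1}\bg_t$ yields the one-step inequality $\bg_t^\top(\bw_t-\bw) \le \frac{1}{2}\left(\norm{\bw_t-\bw}_{A_t}^2 - \norm{\bw_{t+1}-\bw}_{A_t}^2\right) + \frac{1}{2}\bg_t^\top A_t^{-1}\bg_t$.

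Then I would sum and telescope. Because $A_t = A_{t-1} + (\sigma_t + \eta_t)\bg_t\bg_t^\top$ with $A_0 = \alpha\eye{d}$, consecutive potential terms use different matrices; re-aligning them turns $\norm{\bw_t - \bw}_{A_t}^2 - \norm{\bw_t-\bw}_{A_{t-1}}^2$ into $(\sigma_t+\eta_t)\left(\bg_t^\top(\bw_t-\bw)\right)^2$. Initializing $\bw_1 = \bzero$ and discarding the final nonnegative term leaves $\sum_t \bg_t^\top(\bw_t-\bw) \le \frac{\alpha}{2}\norm{\bw}_2^2 + \frac{1}{2}\sum_t(\sigma_t+\eta_t)\left(\bg_t^\top(\bw_t-\bw)\right)^2 + \frac{1}{2}\sum_t \bg_t^\top A_t^{-1}\bg_t$. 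Combining with the curvature term cancels $\sigma_t$ exactly and leaves only the weight $\eta_t$, so $R_T(\bw) \le \frac{\alpha}{2}\norm{\bw}_2^2 + \frac{1}{2}\sum_t \eta_t\left(\bg_t^\top(\bw_t-\bw)\right)^2 + \frac{1}{2}\sum_t\bg_t^\top A_t^{-1}\bg_t$.

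It remains to bound the last two sums. For the middle one, the projection guarantees $\bw_t \in \scK_t$, so $|\bw_t^\top\bx_t| \le C$ and $|\bw^\top\bx_t| \le C$; with Assumption~\ref{ass:Lipschitz} and $\bg_t = \loss_t'(\bw_t^\top\bx_t)\bx_t$ this gives $|\bg_t^\top(\bw_t - \bw)| \le 2CL$, hence $\frac{1}{2}\sum_t \eta_t\left(\bg_t^\top(\bw_t-\bw)\right)^2 \le 2(CL)^2\sum_t\eta_t$. The main obstacle — and the step where the precise constant $\sigma + \eta_T$ inside the logarithm comes from — is the last sum, since a direct determinant bound would produce the looser weights $\sigma_t + \eta_t$ in the trace. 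To sharpen it I would dominate $A_t$ by the constant-weight matrix $\bar{A}_t \defeq \alpha\eye{d} + (\sigma+\eta_T)\sum_{s\le t}\bg_s\bg_s^\top$; since $\sigma_t + \eta_t \ge \sigma + \eta_T$ we have $A_t \succeq \bar{A}_t$ and thus $\bg_t^\top A_t^{-1}\bg_t \le \bg_t^\top \bar{A}_t^{-1}\bg_t$. Applying the matrix determinant lemma together with $\frac{z}{1+z}\le\ln(1+z)$ to $\bar{A}_t$ gives $(\sigma+\eta_T)\,\bg_t^\top\bar{A}_t^{-1}\bg_t \le \ln\frac{|\bar{A}_t|}{|\bar{A}_{t-1}|}$, which telescopes to $\frac{1}{\sigma+\eta_T}\ln\frac{|\bar{A}_T|}{\alpha^d}$. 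Finally, bounding $|\bar{A}_T| \le \left(\trace{\bar{A}_T}/d\right)^d$ by the AM--GM inequality on eigenvalues, with $\trace{\bar{A}_T} = \alpha d + (\sigma+\eta_T)\sum_t\norm{\bg_t}_2^2$, yields the claimed logarithmic term and completes the proof.
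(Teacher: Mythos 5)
Your proposal is correct and follows essentially the same route as the paper's proof: the paper first isolates your projection--telescoping--cancellation argument as a generic proposition (reused later for the sketched variants) and then specializes it to the matrices~\eqref{eqn:ons-mat}, bounding the quadratic term by $2(CL)^2\sum_t\eta_t$ and dominating $A_t$ by a constant-weight matrix exactly as you do (the paper's $\Ahat_t$ is your $\bar{A}_t$ rescaled by $\sigma+\eta_T$). The only cosmetic differences are in the log-determinant step, where the paper invokes concavity of $\ln|X|$ where you use the determinant lemma with $\tfrac{z}{1+z}\le\ln(1+z)$, and Jensen's inequality where you use AM--GM on eigenvalues---these are the same inequalities.
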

%
The dependence on $\norm{\bw}_2^2$ implies that the
method is not completely invariant to transformations of the data.
This is due to the part $\alpha\eye{d}$ in $A_t$.
However, this is not critical since $\alpha$ is fixed and small while the other
part of the bound grows to eventually become the dominating
term. Moreover, we can even set $\alpha = 0$ and replace the inverse with
the Moore-Penrose pseudoinverse to obtain a truly invariant algorithm, as discussed in
Appendix~\ref{app:pseudoinverse}.  We use $\alpha > 0$ in the
remainder for simplicity. 

The implication of this regret bound is the following:
in the worst case where $\sigma = 0$, we set $\eta_t = \sqrt{d/C^2L^2t}$ and the
bound simplifies to
\begin{align}
  R_T(\bw) \le \frac{\alpha}{2}\norm{\bw}_{2}^2 +
  \frac{CL}{2}\sqrt{Td} \ln\left(1 +
  \frac{\sum_{t=1}^T\norm{\bg_t}_2^2}{\alpha CL\sqrt{Td}}\right) \nonumber +
  4CL\sqrt{Td}~,
  \label{eq:aons-convex}
\end{align}
essentially only losing a logarithmic factor compared to the lower
bound in Theorem~\ref{thm:lower_bound}. 
On the other hand, if $\sigma_t \geq \sigma > 0$
for all $t$, then we set $\eta_t = 0$ and the regret simplifies to
\begin{equation}
  R_T(\bw) \le \frac{\alpha}{2}\norm{\bw}_{2}^2 +
  \frac{d}{2\sigma}\ln\left(1 + \frac{\sigma
    \sum_{t=1}^T\norm{\bg_t}_2^2}{d\alpha}\right)~,
  \label{eq:aons-sc}
\end{equation}
extending the $\scO(d\ln T)$ results in~\citep{HazanAgKa07} to the
weaker Assumption~\ref{ass:curve} and a larger comparator set $\scK$.

\section{Efficiency via Sketching}\label{sec:sketch}
Our algorithm so far requires $\Omega(d^2)$ time and space just as ONS. In this section we
show how to achieve regret guarantees nearly as good as the above
bounds, while keeping computation within a constant factor of
first order methods.

Let $G_t \in \R^{t\times d}$ be a matrix such that the $t$-th row is
$\bghat_t^\top$ where we define $\bghat_t = \sqrt{\sigma_t +
  \eta_t}\bg_t$ to be the \emph{to-sketch vector}. Our previous choice of
$A_t$ (Eq.~\eqref{eqn:ons-mat}) can be written as $\alpha\eye{d} + G_t^\top G_t$.
The idea of sketching is to maintain an approximation of $G_t$, denoted by $S_t \in
\R^{m\times d}$ where $m \ll d$ is a small constant called the sketch size. If $m$ is chosen
so that $S_t^\top S_t$ approximates $G_t^\top G_t$ well, we can redefine
$A_t$ as $\alpha\eye{d} + S_t^\top S_t$ for the algorithm.

\begin{algorithm}[t]
\caption{\alglong (\alg)}
\label{alg:SAON}
\begin{algorithmic}[1]
\Require Parameters $C$, $\alpha$ and $m$.
\State Initialize  $\bu_1 = \bzero_{d \times 1}$.
\State Initialize sketch $(S, H) \leftarrow \textbf{SketchInit}(\alpha, m)$.
\For{$t=1$ {\bfseries to} $T$}
    \State Receive example $\bx_t$.
    \State \textbf{Projection step:} compute $\bxhat = S\bx_t, \; \scale = \frac{\tau_C(\bu_t^\top\bx_t)}{\bx_{t}^\top \bx_{t} - {\bxhat}^\top H \bxhat}$ 
         and set $\bw_t = \bu_t -  \scale(\bx_t - S^\top H \bxhat)$.
    \State Predict label $y_t = \bw_t^\top \bx_t$ and suffer loss $\ell_t(y_t)$.
    \State Compute gradient $\bg_t = \ell'_t(y_t)\bx_t$ and the \emph{to-sketch vector} $\bghat = \sqrt{\sigma_t + \eta_t}\bg_t$.
    \State $(S, H) \leftarrow$ \textbf{SketchUpdate}($\bghat$). 
    \State\textbf{Update weight:} $\bu_{t+1} = \bw_t - \frac{1}{\alpha}(\bg_t - S^\top H S\bg_t)$.
\EndFor
\end{algorithmic}
\end{algorithm}

To see why this admits an efficient algorithm, notice that by the Woodbury formula one has
$
A_t^{-1} = \frac{1}{\alpha}\bigl(\eye{d} -
S_t^\top (\alpha\eye{m} + S_t S_t^\top)^{-1} S_t \bigr).
$ 
With the notation $H_t = (\alpha\eye{m} + S_t
S_t^\top)^{-1} \in \R^{m\times m}$ and $\scale_t =
\tau_C(\bu_{t+1}^\top\bx_{t+1})/(\bx_{t+1}^\top\bx_{t+1} -
\bx_{t+1}^\top S_t^\top H_t S_t \bx_{t+1})$,
update~(\ref{eq:AON}) becomes:
\begin{align*}
  \bu_{t+1} = \bw_t - \tfrac{1}{\alpha}\bigl(\bg_t - S_t^\top H_t S_t
  \bg_t\bigr), \quad \mbox{and} \quad \bw_{t+1} &= \bu_{t+1} - \scale_t
  \bigl(\bx_{t+1} - S_t^\top H_t S_t \bx_{t+1}\bigr)~. 
\end{align*}
The operations involving $S_t\bg_t$ or $S_t\bx_{t+1}$ require only
$\order(md)$ time, while matrix vector products with $H_t$ require
only $\order(m^2)$. Altogether, these updates are at most $m$ times
more expensive than first order algorithms as long as $S_t$ and $H_t$
can be maintained efficiently. We call this algorithm \alglong (\alg)
and summarize it in Algorithm~\ref{alg:SAON}.

We now discuss two sketching techniques to maintain the matrices
$S_t$ and $H_t$ efficiently, each requiring $\scO(md)$ storage
and time linear in $d$.

%

\paragraph{Frequent Directions (FD).}

\begin{figure*}[t]
  \centering
\begin{tabular}{@{}cc@{}}
\begin{minipage}{0.47\textwidth}
\begin{algorithm}[H]
\caption{FD-Sketch for \fdalg}
\label{alg:FD}
\begin{algorithmic}[1]
\Internal $S$ and $H$. 

\vspace{5pt}
\Init{$\alpha, m$}
\State Set $S = \bzero_{m\times d}$ and  $H = \tfrac{1}{\alpha}
\eye{m}$.
\State Return $(S, H)$.

\vspace{5pt}
\setcounter{ALG@line}{0}
\Update{$\bghat$}
\State Insert $\bghat$ into the last row of $S$.
\State Compute eigendecomposition: $V^\top \Sigma V = S^\top S$ and set $S = (\Sigma - \Sigma_{m,m}\eye{m})^{\frac{1}{2}} V$.  
\State Set $H = \diag{\frac{1}{\alpha + \Sigma_{1,1} - \Sigma_{m,m}},  \cdots, \frac{1}{\alpha}}$.
\State Return $(S, H)$.  
\end{algorithmic}
\end{algorithm}
\end{minipage} &

\begin{minipage}{0.47\textwidth}
  \begin{algorithm}[H]
\caption{Oja's Sketch for \ojaalg}
\label{alg:Oja}
\begin{algorithmic}[1]
\Internal $t$, $\Lambda$, $V$ and $H$. 

\vspace{5pt}
\Init{$\alpha, m$}
\State Set $t = 0, \Lambda = \bzero_{m \times m}, H =
\tfrac{1}{\alpha} \eye{m}$ and $V$ to any $m \times d$ matrix with orthonormal rows.
\State Return ($\bzero_{m \times d}$, $H$).

\vspace{6pt}
\setcounter{ALG@line}{0}
\Update{$\bghat$}
\State Update $t \leftarrow t + 1$, $\Lambda$ and $V$ as Eqn.~\ref{eqn:oja-eigs}.
\State Set $S = (t\Lambda)^{\frac{1}{2}} V$.
\State Set $H = \diag{\frac{1}{\alpha + t\Lambda_{1,1}},  \cdots, \frac{1}{\alpha + t\Lambda_{m,m}}}$.
\State Return $(S, H)$.
\end{algorithmic}
\end{algorithm}
\end{minipage}
\end{tabular}
\end{figure*}

  

%
Frequent Directions sketch~\citep{GhashamiLiPhWo15, Liberty13} is
a deterministic sketching method. 
It maintains the invariant that the last row of $S_t$ is always $\bzero$.  
On each round, the vector $\bghat_t^\top$ is inserted into the last row of $S_{t-1}$, then the
covariance of the resulting matrix is eigendecomposed into $V_t^\top
\Sigma_t V_t$ and $S_t$ is set to $(\Sigma_t -
\rho_{t}\eye{m})^{\frac{1}{2}} V_t$ where $\rho_t$ is the smallest
eigenvalue.  Since the rows of $S_t$ are orthogonal to each other,
$H_t$ is a diagonal matrix and can be maintained efficiently (see
Algorithm~\ref{alg:FD}).
The sketch update works in $\scO(md)$ time (see
\citep{GhashamiLiPhWo15} and Appendix~\ref{app:sparse}) so the
total running time is $\scO(md)$ per round.  
We call this combination \fdalg and prove the following regret bound
with notation $\Omega_k = \sum_{i=k+1}^d \lambda_i(G_T^\top G_T)$ for any $k = 0,\dots,m-1$.
\begin{theorem}\label{thm:FD}
Under Assumptions~\ref{ass:loss} and~\ref{ass:curvature}, suppose that $\sigma_t \geq \sigma \geq 0$ for all $t$ 
and $\eta_t$ is non-increasing. 
\fdalg ensures that for any $\bw \in \scK$ and $k = 0,\ldots,m-1$, we have
\begin{align*}
R_T(\bw) \le \frac{\alpha}{2}\norm{\bw}_{2}^2 + 2(CL)^2 \sum_{t=1}^T\eta_t + \frac{m}{2(\sigma +
  \eta_T)} \ln\left(1 + \frac{\trace{S_T^\top S_T}}{m\alpha}\right) +
\frac{m\Omega_k}{2(m-k)(\sigma+\eta_T)\alpha}~.
\end{align*}
\end{theorem}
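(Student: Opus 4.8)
The plan is to reduce the FD-SON regret analysis to the full-matrix bound of Theorem~\ref{thm:AON} by controlling the approximation error between the sketched curvature matrix $A_t = \alpha\eye{d} + S_t^\top S_t$ and the ideal one $\alpha\eye{d} + G_t^\top G_t$. The entire proof of Theorem~\ref{thm:AON} is structured around two quantities: a term from the weight-comparison telescoping that depends on $\norm{\bg_t}_{A_t^{-1}}^2$ (equivalently the log-determinant of the final matrix), and a term from the projection step. I expect the FD analysis to re-run that same telescoping argument, so the first thing I would do is isolate exactly which steps of the proof of Theorem~\ref{thm:AON} use the specific form $A_t = \alpha\eye{d} + G_t^\top G_t$, and replace each use with a bound that holds for the sketched $A_t$ up to the Frequent Directions error.

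\textbf{Key structural facts from Frequent Directions.} The crux is the deterministic FD guarantee: if $\rho_t$ denotes the smallest eigenvalue subtracted off at round $t$, then the accumulated mass removed satisfies $\sum_{t=1}^T \rho_t \le \frac{\Omega_k}{m-k}$ for every $k \le m-1$, where $\Omega_k = \sum_{i=k+1}^d \lambda_i(G_T^\top G_T)$ is the tail spectral mass. This is the standard spectral-error bound for FD and is what produces the final additive term $\frac{m\Omega_k}{2(m-k)(\sigma+\eta_T)\alpha}$. Concretely, one has the sandwich $0 \preceq G_t^\top G_t - S_t^\top S_t \preceq \paren{\sum_{s=1}^t \rho_s}\eye{d}$, meaning the sketched matrix underestimates the true Gram matrix by at most a bounded multiple of the identity. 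I would state this as a self-contained lemma (citing~\citep{GhashamiLiPhWo15, Liberty13}) before invoking it.

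\textbf{The main steps, in order.} First I would establish the spectral sandwich above together with $\sum_t \rho_t \le \Omega_k/(m-k)$. Second, I would revisit the telescoping inequality from Theorem~\ref{thm:AON}; the place where the sketched matrix differs is the curvature term $\frac{\sigma_t+\eta_t}{2}(\bg_t^\top(\bw_t-\bw))^2$, which in the full algorithm is absorbed exactly into the quadratic form because $(\sigma_t+\eta_t)\bg_t\bg_t^\top = \bghat_t\bghat_t^\top$ is a summand of $A_t$. With the sketch, the corresponding summand is only approximately present, and the shortfall is precisely $(G_t^\top G_t - S_t^\top S_t)$. Bounding $\bw_t - \bw$ against this residual and using $|\bg_t^\top(\bw_t-\bw)| \le 2CL$ (from Assumption~\ref{ass:loss} and boundedness of predictions on $\scK$) converts the missing curvature into a term proportional to $\sum_t \rho_t$, i.e.\ the extra $\Omega_k$ summand. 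Third, the log-determinant term: since $S_T^\top S_T \preceq G_T^\top G_T$, the trace and determinant bounds only improve or stay controlled, and $\trace{S_T^\top S_T} \le \sum_t\norm{\bghat_t}_2^2$, giving the $\frac{m}{2(\sigma+\eta_T)}\ln(1+\trace{S_T^\top S_T}/(m\alpha))$ term with $m$ replacing $d$ because $S_T$ has rank at most $m$. The $\frac{\alpha}{2}\norm{\bw}_2^2$ and $2(CL)^2\sum_t\eta_t$ terms carry over unchanged.

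\textbf{The hard part} will be tracking the curvature residual cleanly. In Theorem~\ref{thm:AON} the curvature assumption is exploited to cancel the $\frac{\sigma_t}{2}(\nabla f_t^\top(\bw_t-\bw))^2$ lower bound against an identical quadratic in $A_t$; once $A_t$ is sketched, this cancellation is no longer exact, and the leftover is a sum of terms of the form $(\bw_t-\bw)^\top(G_t^\top G_t - S_t^\top S_t)(\bw_t-\bw)$ scaled by the projection geometry. I must bound this without reintroducing a $\norm{\bw}_2^2$ dependence that is not fixed and small; the resolution is that the relevant quadratic acts on $\bg_t$-directions, so applying the sandwich in the $\bg_t$ direction and using the uniform prediction bound $|\bg_t^\top(\bw_t-\bw)|\le 2CL$ keeps the residual dimension-free and proportional to $\rho_t$. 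Getting the constants and the division by $(\sigma+\eta_T)\alpha$ right in that step — and confirming the telescoping still closes when the per-round matrix is only monotone up to the FD error — is where the real care is needed; everything else follows the template of Theorem~\ref{thm:AON}.
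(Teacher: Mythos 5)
Your proposal assembles the right ingredients (the FD guarantee $\sum_{t}\rho_t \le \Omega_k/(m-k)$, a telescoping regret decomposition valid for an arbitrary PSD sequence $A_t$, the rank-$m$ log-determinant bound, and the prediction bound $|\bg_t^\top(\bw_t-\bw)|\le 2CL$), but it attaches the FD approximation error to the wrong term, and the step where you claim the log-determinant part goes through unchanged is false. In the paper's proof (via Proposition~\ref{prop:meta}, which is stated for any PSD matrices $A_t$ precisely so it can be reused here; a black-box reduction to Theorem~\ref{thm:AON} is impossible anyway, since the sketched algorithm produces different iterates than the full-matrix one), the diameter term $R_D=\sum_t(\bw_t-\bw)^\top(A_t-A_{t-1}-\sigma_t\bg_t\bg_t^\top)(\bw_t-\bw)$ needs no FD correction at all: the per-round increment is $S_t^\top S_t - S_{t-1}^\top S_{t-1} = \bghat_t\bghat_t^\top - \rho_t V_t^\top V_t \preceq \bghat_t\bghat_t^\top$, so FD's shrinkage only helps, and $R_D \le 4(CL)^2\sum_t\eta_t$ exactly as in Theorem~\ref{thm:AON}. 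The error enters instead in the gradient term $R_G = \sum_t \bg_t^\top A_t^{-1}\bg_t$: since $\bghat_t\bghat_t^\top = (A_t - A_{t-1}) + \rho_t V_t^\top V_t$ now exceeds the increment, the telescoping into $\ln\left(|A_T|/|A_0|\right)$ breaks, and the leftover $\rho_t\inner{A_t^{-1}, V_t^\top V_t} \le \rho_t m/\alpha$ is exactly what sums to the term $\frac{m\Omega_k}{(m-k)\alpha}$.

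That this is not a cosmetic bookkeeping difference can be seen from a counterexample to your uncorrected claim on $R_G$: take $m=1$ with gradients alternating between two orthonormal directions. FD then zeroes out the sketch every round ($\rho_t = \norm{\bghat_t}_2^2$ and $S_t = \bzero$), so $A_t = \alpha\eye{d}$ and $R_G$ grows linearly in $T$, while your claimed bound $\frac{m}{\sigma+\eta_T}\ln\left(1+\trace{S_T^\top S_T}/(m\alpha)\right)$ equals $0$. Conversely, your plan for extracting the $\Omega_k$ term from the curvature side cannot be executed: the cumulative residual $G_t^\top G_t - S_t^\top S_t$ is not supported on the direction $\bg_t$, so controlling $(\bw_t-\bw)^\top(G_t^\top G_t - S_t^\top S_t)(\bw_t-\bw)$ would require bounds on $|\bg_s^\top(\bw_t - \bw)|$ for rounds $s < t$, which are unavailable---the projection only guarantees $|\bw_t^\top\bx_t|\le C$ at round $t$, and $\norm{\bw_t - \bw}_2$ is unbounded in this invariant setting (the alternative sandwich bound $(\sum_{s\le t}\rho_s)\norm{\bw_t-\bw}_2^2$ reintroduces exactly the norm dependence you correctly flag as forbidden). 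The two error accounts cannot be swapped or made to cancel across terms; the proof only closes when the $\rho_t$ correction is made inside $R_G$, as the paper does.
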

The bound depends on the spectral decay
$\Omega_k$, which essentially is the only extra term compared to the
bound in Theorem~\ref{thm:AON}.  Similarly to previous discussion, if
$\sigma_t \geq \sigma$, we get the bound $ \frac{\alpha}{2}\norm{w}_{2}^2
+ \frac{m}{2\sigma}\ln\left(1+ \frac{\trace{S_T^\top
    S_T}}{m\alpha}\right) + \frac{m\Omega_k}{2(m-k)\sigma\alpha}~.  $
With $\alpha$ tuned well, we pay logarithmic regret for the top $m$
eigenvectors, but a square root regret $\scO(\sqrt{\Omega_k})$ for
remaining directions not controlled by our sketch. This is expected
for deterministic sketching which focuses on the dominant part of the
spectrum. When $\alpha$ is not tuned we still get sublinear regret as
long as $\Omega_k$ is sublinear. 


\paragraph{Oja's Algorithm.}

Oja's algorithm \citep{Oja82, OjaKa85} is not usually considered as a
sketching algorithm but seems very natural here.  This algorithm uses
online gradient descent to find eigenvectors and eigenvalues of
data in a streaming fashion, with the to-sketch vector $\bghat_t$'s as
the input.  Specifically, let $V_t \in \R^{m \times d}$ denote the
estimated eigenvectors and the diagonal matrix $\Lambda_t \in \R^{m
  \times m}$ contain the estimated eigenvalues at the end of round
$t$.  Oja's algorithm updates as:
\begin{align}
\Lambda_{t} = (\eye{m} - \Gamma_t) \Lambda_{t-1} + \Gamma_t
\;\diag{V_{t-1} \bghat_t}^2, \quad\quad
V_t \xleftarrow{\text{orth}} V_{t-1} + \Gamma_t V_{t-1}\bghat_t
\bghat_t^\top
\label{eqn:oja-eigs}
\end{align}
where $\Gamma_t \in \R^{m \times m}$ is a diagonal matrix with
(possibly different) learning rates of order $\Theta(1/t)$ on the
diagonal, and the ``$\xleftarrow{\text{orth}}$'' operator represents
an orthonormalizing step.\footnote{For simplicity, we assume that
  $V_{t-1} + \Gamma_t V_{t-1}\bghat_t \bghat_t^\top$ is always of full
  rank so that the orthonormalizing step does not reduce the dimension
  of $V_t$. \label{fn:full_rank}} The sketch is then $S_t = (t
\Lambda_t)^{\frac{1}{2}} V_t$.  The rows of $S_t$ are orthogonal and
thus $H_t$ is an efficiently maintainable diagonal matrix (see
Algorithm~\ref{alg:Oja}). We call this combination \ojaalg.

The time complexity of Oja's algorithm is $\scO(m^2d)$ per round due to the
orthonormalizing step.  To improve the running time to $\scO(md)$, one
can only update the sketch every $m$ rounds (similar to the block
power method~\citep{HardtPr14, LiLiLu15}).  The regret guarantee of
this algorithm is unclear since existing analysis for Oja's algorithm
is only for the stochastic setting (see
e.g.~\citep{BalsubramaniDaFr13, LiLiLu15}).  However, \ojaalg provides
good performance experimentally.

\section{Sparse Implementation}\label{sec:sparse}
In many applications, examples (and hence gradients) are sparse in the
sense that $\norm{\bx_t}_{0} \leq s$ for all $t$ and some small
constant $s \ll d$.  Most online first order methods enjoy a
per-example running time depending on $s$ instead of $d$ in such
settings.  Achieving the same for second order methods is more
difficult since $A_t^{-1}\bg_t$ (or sketched versions) are typically
dense even if $\bg_t$ is sparse.

We show how to implement our algorithms in sparsity-dependent time,
specifically, in $\scO(m^2 + ms)$ for \fdalg and in
$\scO(m^3 + m s)$ for \ojaalg.  We emphasize that since the sketch
would still quickly become a dense matrix even if the examples are
sparse, achieving purely sparsity-dependent time is highly non-trivial 
and may be of independent interest.  
Due to space limit, below we only briefly mention how to do
it for \ojaalg.  Similar discussion for the FD sketch can be
found in Appendix~\ref{app:sparse}.  Note that mathematically these
updates are equivalent to the non-sparse counterparts and regret
guarantees are thus unchanged.

There are two ingredients to doing this for \ojaalg:
(1) The eigenvectors $V_t$ are represented as $V_t = F_t Z_t$, where
$Z_t \in \R^{m\times d}$ is a sparsely updatable direction (Step 3 in
Algorithm~\ref{alg:SOja}) and $F_t \in \R^{m\times m}$ is a matrix such
that $F_t Z_t$ is orthonormal.  (2) The weights $\bw_t$ are split as
$\bbw_t + Z_{t-1}^\top \bb_t$, where $\bb_t \in \R^m$ maintains the
weights on the subspace captured by $V_{t-1}$ (same as
$Z_{t-1}$), and $\bbw_t$ captures the weights on the complementary
subspace which are again updated sparsely.

We describe the sparse updates for $\bbw_t$ and $\bb_t$ below with the
details for $F_t$ and $Z_t$ deferred to
Appendix~\ref{app:sparse-oja}. Since $S_t = (t
\Lambda_t)^{\frac{1}{2}} V_t = (t \Lambda_t)^{\frac{1}{2}} F_tZ_t$ and
$\bw_t = \bbw_t + Z_{t-1}^\top\bb_t$, we know $\bu_{t+1}$ is 
\begin{align}
\bw_t - \big(\eye{d} - S_t^\top H_t
S_t\big)\tfrac{\bg_t}{\alpha} = \underbrace{\bbw_t - \tfrac{\bg_t}{\alpha} - (Z_t -
  Z_{t-1})^\top \bb_t}_{\defeq \bbu_{t+1}} + Z_t^\top
(\underbrace{\bb_t + \tfrac{1}{\alpha} F_t^\top (t\Lambda_t H_t) F_t
  Z_t \bg_t}_{\defeq \bb_{t+1}'})~.
\label{eqn:u-oja}
\end{align}
Since $Z_t - Z_{t-1}$ is sparse by construction and the
matrix operations defining $\bb_{t+1}'$ scale with $m$, overall the
update can be done in $\scO(m^2 + ms)$.  Using the update for
$\bw_{t+1}$ in terms of $\bu_{t+1}$, $\bw_{t+1}$ is equal to
\begin{align}
  \bu_{t+1} - \scale_t (\eye{d} - S_t^\top H_t S_t ) \bx_{t+1} =
  \underbrace{\bbu_{t+1} - \scale_{t}\bx_{t+1}}_{\defeq \bbw_{t+1}} +
  Z_t^\top (\underbrace{\bb_{t+1}' + \scale_{t} F_t^\top (t\Lambda_t
    H_t) F_t Z_t\bx_{t+1}}_{\defeq \bb_{t+1}})~.
  \label{eqn:w-oja}
\end{align}
%
Again, it is clear that all the computations scale with $s$ and not $d$, so both
$\bbw_{t+1}$ and $\bb_{t+1}$ require only $O(m^2+ms)$ time to
maintain. Furthermore, the prediction $\bw_t^\top \bx_t = \bbw_t^\top
\bx_t + \bb_t^\top Z_{t-1} \bx_t$ can also be computed in $\scO(ms)$
time. The $\scO(m^3)$ in the overall complexity comes from a
Gram-Schmidt step in maintaining $F_t$ (details in
Appendix~\ref{app:sparse-oja}).

The pseudocode is presented in Algorithms~\ref{alg:SON}
and~\ref{alg:SOja} with some details deferred to
Appendix~\ref{app:sparse-oja}. This is the first sparse implementation
of online eigenvector computation to the best of our knowledge.

\begin{algorithm}[t]
\caption{Sparse \alglong with Oja's Algorithm}
\label{alg:SON}
\begin{algorithmic}[1]
\Require Parameters $C$, $\alpha$ and $m$.
\State Initialize $\bbu = \bzero_{d \times 1}$ and $\bb = \bzero_{m \times 1}$. 
\State ($\Lambda, F, Z, H) \leftarrow \textbf{SketchInit}(\alpha, m)$ \quad (Algorithm~\ref{alg:SOja}).
\For{$t=1$ {\bfseries to} $T$}
    \State Receive example $\bx_t$.
    \State \textbf{Projection step:} compute $\bxhat = FZ\bx_t$ and
    $\scale = \frac{\tau_C(\bbu^\top\bx_t + \bb^\top
      Z\bx_t)}{\bx_{t}^\top \bx_{t} - (t-1){\bxhat}^\top \Lambda H
      \bxhat}$. 
      
    Obtain $\bbw = \bbu -  \scale \bx_t$ and $\bb \leftarrow
    \bb + \scale(t-1)F^\top \Lambda H \bxhat $ \quad (Equation~\ref{eqn:w-oja}).
    \State Predict label $y_t = \bbw^\top \bx_t + \bb^\top Z\bx_t$ and suffer loss $\ell_t(y_t)$.
    \State Compute gradient $\bg_t = \ell'_t(y_t)\bx_t$ and the \emph{to-sketch vector} $\bghat = \sqrt{\sigma_t + \eta_t}\bg_t$.
    \State ($\Lambda$, $F$, $Z$, $H$, $\bdelta$) $\leftarrow$
    \textbf{SketchUpdate}($\bghat$) \quad (Algorithm~\ref{alg:SOja}).
    \State \textbf{Update weight:} $\bbu = \bbw - \tfrac{1}{\alpha}
    \bg_t - (\bdelta^\top\bb) \bghat  $ and $\bb \leftarrow \bb +
    \tfrac{1}{\alpha}tF^\top \Lambda HF Z\bg_t$ \quad (Equation~\ref{eqn:u-oja}).
    \EndFor
\end{algorithmic}
\end{algorithm}

\begin{algorithm}[t]
\caption{Sparse Oja's Sketch}
\label{alg:SOja}
\begin{algorithmic}[1]
\Internal $t$, $\Lambda$, $F$, $Z$, $H$ and $K$. 

\vspace{5pt}
\Init{$\alpha, m$}
\State Set $t = 0, \Lambda = \bzero_{m \times m}, F = K = \alpha H =
\eye{m}$ and $Z$ to any $m \times d$ matrix with 
orthonormal rows.
\State Return ($\Lambda$, $F$, $Z$, $H$).

\vspace{5pt}
\setcounter{ALG@line}{0}
\Update{$\bghat$}
    \State Update $t \leftarrow t + 1$. Pick a diagonal stepsize
    matrix $\Gamma_t$ to update $\Lambda \leftarrow (\eye{} -
    \Gamma_t) \Lambda + \Gamma_t \;\diag{FZ \bghat}^2$. 
    \State Set $\bdelta = A^{-1}\Gamma_t FZ \bghat$ and update $K
    \leftarrow K + \bdelta \bghat^\top Z^\top + Z \bghat \bdelta^\top
    + (\bghat^\top \bghat) \bdelta \bdelta^\top $. 
    \State Update $Z \leftarrow Z +  \bdelta \bghat^\top$. 
    \State $(L, Q) \leftarrow \text{Decompose}(F, K)$
    (Algorithm~\ref{alg:Gram-Schmidt}), so that $LQZ = FZ$ and $QZ$ is
    orthogonal. Set $F = Q$.
    \State Set $H \leftarrow \diag{\frac{1}{\alpha + t \Lambda_{1,1}},  \cdots, \frac{1}{\alpha + t \Lambda_{m,m}}}$.
    \State Return ($\Lambda$, $F$, $Z$, $H$, $\bdelta$).
\end{algorithmic}
\end{algorithm}

\section{Experiments}
Preliminary experiments revealed that out of our two sketching
options, Oja's sketch generally has better performance (see
Appendix~\ref{app:experiment}). For more thorough evaluation, we
implemented the sparse version of \ojaalg in Vowpal
Wabbit.\footnote{An open source machine learning toolkit available at
  \url{http://hunch.net/\~ vw}} We compare it with
\textsc{AdaGrad}~\citep{DuchiHaSi2011, McMahanSt2010} on both
synthetic and real-world datasets. Each algorithm takes a stepsize
parameter: $\tfrac{1}{\alpha}$ serves as a stepsize for \ojaalg and a
scaling constant on the gradient matrix for \adagrad. We try both
methods with the parameter set to $2^j$ for $j = -3, -2, \ldots,
6$ and report the best results. We keep the stepsize matrix in \ojaalg
fixed as $\Gamma_t = \frac{1}{t}\eye{m}$ throughout. All methods make
one online pass over data minimizing square loss.

\subsection{Synthetic Datasets}

\begin{figure}[t]
\begin{minipage}{.32\textwidth}
\begin{figure}[H]
 \includegraphics[width=1\textwidth]{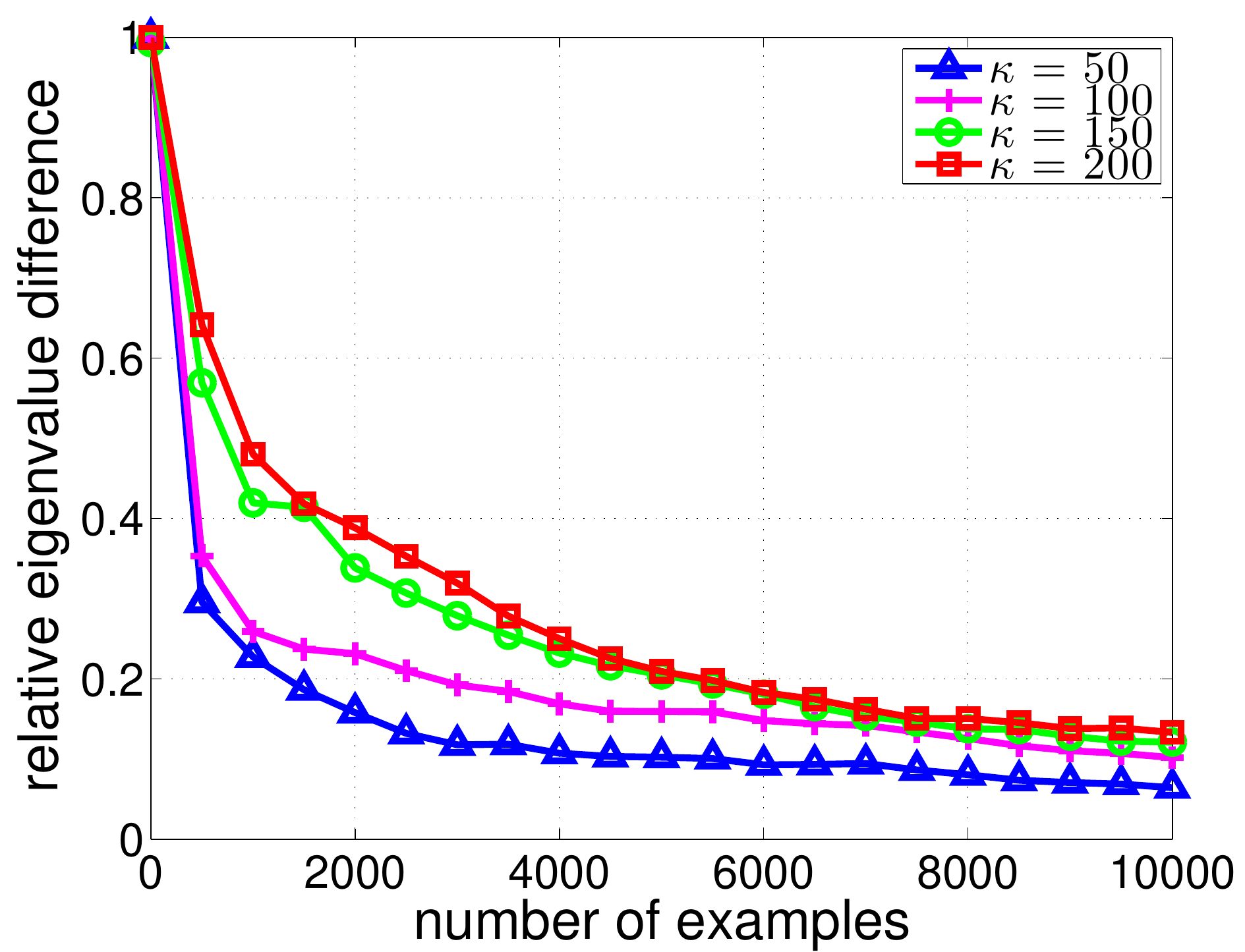}
\caption{Oja's algorithm's eigenvalue recovery error.}
\label{fig:eigs}
\end{figure}
\end{minipage}
\begin{minipage}{.08\textwidth}
\quad
\end{minipage}
\begin{minipage}{.6\textwidth}
\begin{figure}[H]
  \centering
  \subfigure {\includegraphics[width=.48\textwidth]{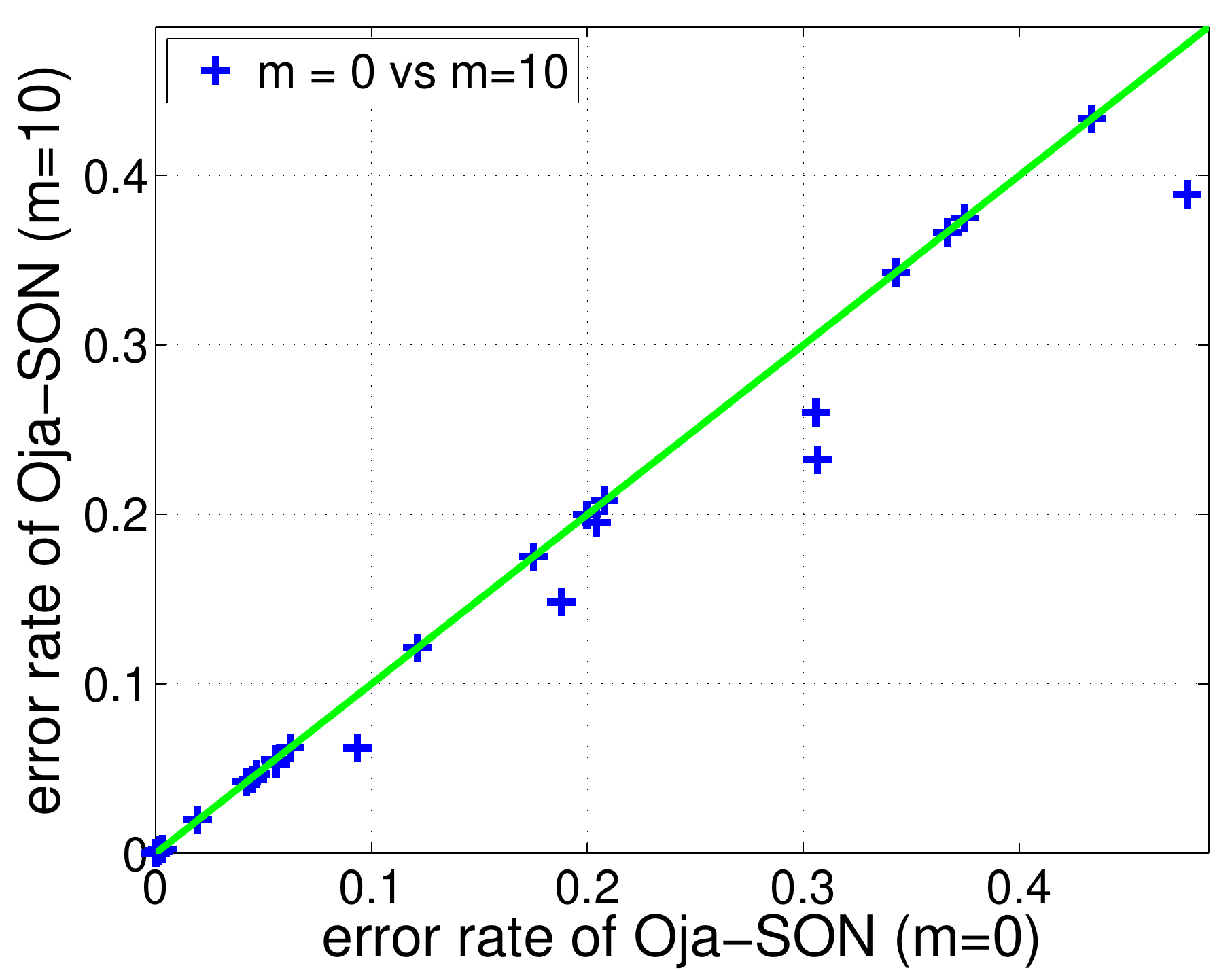}
    \label{fig:nodiag}
  }\hfill 
  \subfigure
      {\includegraphics[width=.49\textwidth]{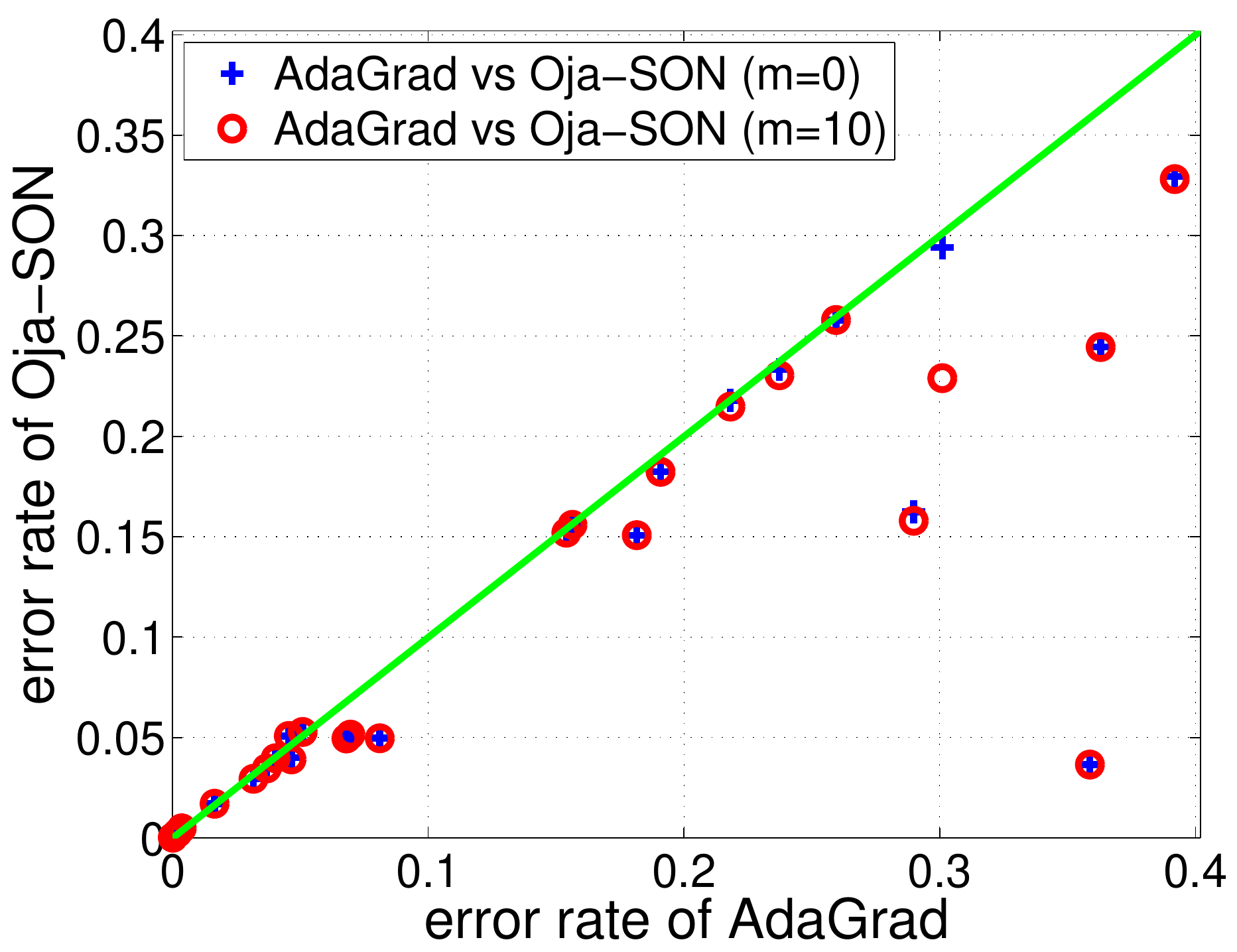} \label{fig:diag}}
\caption{(a) Comparison of two sketch sizes on real data, and (b)
Comparison against \adagrad on real data.}
\end{figure}
\end{minipage}
\end{figure}

To investigate \ojaalg's performance in the setting it is really
designed for, we generated a range of synthetic ill-conditioned
datasets as follows. We picked a random Gaussian matrix $Z\sim
\R^{T\times d}$ ($T = 10,\!000$ and $d = 100$) and a random
orthonormal basis $V \in \R^{d\times d}$. We chose a specific spectrum
$\blambda \in \R^d$ where the first $d-10$ coordinates are 1 and the
rest increase linearly to some fixed \emph{condition number parameter
  $\kappa$}. We let $X = Z\diag{\blambda}^{\frac{1}{2}} V^\top$ be our
example matrix, and created a binary classification problem with
labels $y = \sign(\btheta^\top\bx)$, where $\btheta \in \R^d$ is a
random vector.  We generated 20 such datasets with the same $Z, V$ and
labels $y$ but different values of $\kappa \in \{10, 20, \ldots,
200$\}.  Note that if the algorithm is truly invariant, it would have
the same behavior on these 20 datasets.

Fig.~\ref{fig:synthetic} (in Section~\ref{sec:intro}) shows the final
progressive error (i.e. fraction of misclassified examples after one pass over data) for
\textsc{AdaGrad} and \ojaalg (with sketch size $m=0,5,10$) as the
condition number increases.  As expected, the plot confirms the
performance of first order methods such as \textsc{AdaGrad} degrades
when the data is ill-conditioned.  The plot also shows that as
the sketch size increases, \ojaalg becomes more accurate: when $m=0$
(no sketch at all), \ojaalg is vanilla gradient descent and is worse
than \textsc{AdaGrad} as expected; when $m=5$, the accuracy greatly
improves; and finally when $m=10$, the accuracy of \ojaalg is
substantially better and hardly worsens with $\kappa$.

To further explain the effectiveness of Oja's algorithm in identifying
top eigenvalues and eigenvectors, the plot in Fig.~\ref{fig:eigs}
shows the largest relative difference between the true and estimated
top 10 eigenvalues as Oja's algorithm sees more data.  This gap drops
quickly after seeing just 500 examples.

\subsection{Real-world Datasets}\label{subsec:real_data}

Next we evaluated \ojaalg on 23 benchmark datasets from the UCI and
LIBSVM repository (see Appendix~\ref{app:experiment} for description
of these datasets).  Note that some datasets are very high dimensional
but very sparse (e.g. for {\it 20news}, $d \approx 102,000$ and $s
\approx 94$), and consequently methods with running time quadratic
(such as ONS) or even linear in dimension rather than sparsity are
prohibitive.

In Fig.~\ref{fig:nodiag}, we show the effect of using sketched second order information, 
by comparing sketch size $m=0$ and $m=10$ for \ojaalg (concrete error rates in
Appendix~\ref{app:experiment}).  We observe significant improvements
in 5 datasets ({\it acoustic, census, heart, ionosphere, letter}),
demonstrating the advantage of using second order information.
However, we found that \ojaalg was outperformed by \textsc{AdaGrad} on
most datasets, mostly because the diagonal adaptation of
\textsc{AdaGrad} greatly reduces the condition number on these
datasets.  Moreover, one disadvantage of \alg is that for the
directions not in the sketch, it is essentially doing vanilla gradient
descent.  We expect better results using diagonal adaptation as in
\textsc{AdaGrad} in off-sketch directions.

To incorporate this high level idea, we performed a simple
modification to \ojaalg: upon seeing example $\bx_t$, we feed
$D_t^{-\frac{1}{2}} \bx_t$ to our algorithm instead of $\bx_t$, where
$D_t \in \R^{d\times d}$ is the diagonal part of the matrix
$\sum_{\tau=1}^{t-1} \bg_\tau\bg_\tau^\top$.\footnote{$D_1$ is defined
  as $0.1\times \eye{d}$ to avoid division by zero.} The intuition is
that this diagonal rescaling first homogenizes the scales of all
dimensions. Any remaining ill-conditioning is further addressed by the
sketching to some degree, while the complementary subspace is no
worse-off than with \textsc{AdaGrad}. We believe this flexibility in
picking the right vectors to sketch is an attractive aspect of our
sketching-based approach.

With this modification, \ojaalg outperforms \textsc{AdaGrad} on most
of the datasets even for $m = 0$, as shown in Fig.~\ref{fig:diag}
(concrete error rates in Appendix~\ref{app:experiment}). The
improvement on \textsc{AdaGrad} at $m=0$ is surprising but not
impossible as the updates are not identical--our update is scale
invariant like~\citet{RossMiLa13}. However, the diagonal adaptation
already greatly reduces the condition number on all datasets except
{\it splice} (see Fig.~\ref{fig:splice} in
Appendix~\ref{app:experiment} for detailed results on this dataset),
so little improvement is seen for sketch size $m=10$ over $m=0$.  For
several datasets, we verified the accuracy of Oja's method in
computing the top-few eigenvalues (Appendix~\ref{app:experiment}), so
the lack of difference between sketch sizes is due to the lack of
second order information after the diagonal correction.

The average running time of our algorithm when $m=10$ is about 11
times slower than \textsc{AdaGrad}, matching expectations. Overall,
\alg can significantly outperform baselines on ill-conditioned data,
while maintaining a practical computational complexity.

\paragraph{Acknowledgements} This work was done when Haipeng Luo and Nicol\`o Cesa-Bianchi were at Microsoft Research, New York.
We thank Lijun Zhang for pointing out our mistake in the regret proof
of another sketching method that appeared in an earlier version.



\newpage
\bibliographystyle{abbrvnat}
{\small \bibliography{../nips/ref_short}}

\newpage
\appendix

\begin{center}
\bf \Large Supplementary material for \\ ``Efficient Second Order Online Learning by Sketching''
\end{center}

\section{Proof of Theorem~\ref{thm:lower_bound}}\label{app:lower_bound}
\begin{proof}
Assuming $T$ is a multiple of $d$ without loss of generality, we pick $\bx_t$ from the basis vectors $\{\be_1, \ldots, \be_d\}$ 
so that each $\be_i$ appears $T/d$ times (in an arbitrary order). Note that now $\scK$ is just a hypercube:
\[
  \scK = \theset{\bw}{|\bw^\top\bx_t| \leq C, \;\;\forall t} = \theset{\bw}{\norm{\bw}_\infty \leq C}.
\]

Let $\xi_1,\dots,\xi_T$ be independent Rademacher random
variables such that $\Pr(\xi_t = +1) = \Pr(\xi_t = -1) =
\tfrac{1}{2}$.  For a scalar $\theta$, we define loss
function\footnote{By adding a suitable constant, these losses can
  always be made nonnegative while leaving the regret unchanged.}
$\ell_t(\theta) = (\xi_t L)\theta$, so that
Assumptions~\ref{ass:loss} and~\ref{ass:curve} are clearly satisfied
with $\sigma_t = 0$.  We show that, for any online algorithm,
\[
    \E[ R_T] = \E\left[\sum_{t=1}^T \ell_t\bigl(\bw_t^{\top}\bx_t\bigr) - \inf_{\bw\in\scK} \sum_{t=1}^T \ell_t\bigl(\bw^{\top}\bx_t\bigr) \right] \ge CL\sqrt{\frac{dT}{2}}
\]
which implies the statement of the theorem.

First of all, note that $\E\Bigl[\loss_t\bigl(\bw_t^{\top}\bx_t\bigr)
  \,\Big|\, \xi_1,\dots,\xi_{t-1} \Bigr] = 0$ for any
$\bw_t$. Hence we have
\begin{align*}
    \E\left[\sum_{t=1}^T \loss_t\bigl(\bw_t^{\top}\bx_t\bigr) - \inf_{\bw\in\scK} \sum_{t=1}^T \loss_t\bigl(\bw^{\top}\bx_t\bigr) \right]
&=
    \E\left[\sup_{\bw\in\scK} \sum_{t=1}^T -\loss_t\bigl(\bw^{\top}\bx_t\bigr) \right]
=
    L\,\E\left[\sup_{\bw\in\scK} \bw^{\top}\sum_{t=1}^T \xi_t \bx_t \right],
\end{align*}
which, by the construction of $\bx_t$, is 
\[
  CL\,\E\left[  \norm{\sum_{t=1}^T \xi_t \bx_t}_1 \right] = CLd\,\E\left[ \left|\sum_{t=1}^{T/d} \xi_t \right| \right] 
  \geq CLd \sqrt{\frac{T}{2d}} = CL\sqrt{\frac{dT}{2}},
\]
where the final bound is due to the Khintchine inequality (see
e.g. Lemma 8.2 in~\cite{CesabianchiLu06}).  This concludes the proof.
\end{proof}

\section{Projection}\label{app:projection}

We prove a more general version of Lemma~\ref{lemma:projection} which
does not require invertibility of the matrix $A$ here.

\begin{lemma}  
For any $\bx \neq \bzero, \bu \in \R^{d\times 1}$ and positive semidefinite matrix $A \in \R^{d\times d}$, we have
\[ \bw^* = \argmin_{\bw: |\bw^\top\bx| \leq C} \norm{\bw-\bu}_{A} = 
\left\{ \begin{array}{cl}
        \bu - \frac{\tau_C(\bu^\top\bx)}{\bx^\top A^{\dagger} \bx} A^{\dagger}\bx & \text{if $\bx \in \range(A)$}
    \\
    \\
        \bu - \frac{\tau_C(\bu^\top\bx)}{\bx^\top (\eye{} - A^{\dagger}A) \bx} (\eye{} - A^{\dagger}A) \bx & \text{if $\bx \notin \range(A)$}
    \end{array} \right.
\]
where $\tau_C(y) = \sgn(y)\max\{|y| - C, 0\}$ and $A^{\dagger}$ is the Moore-Penrose pseudoinverse of $A$. (Note that when $A$ is rank deficient, this is one of the many possible solutions.)
\end{lemma}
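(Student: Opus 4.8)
The plan is to treat this as a convex projection and identify the optimum through first-order (KKT) stationarity, then certify global optimality by a completing-the-square argument that sidesteps any appeal to general KKT sufficiency. First I would observe that the feasible region $\{\bw : |\bw^\top\bx|\le C\}$ is a slab, i.e.\ the intersection of the half-spaces $\bw^\top\bx\le C$ and $\bw^\top\bx\ge -C$, so it is convex, and the objective $\tfrac12\norm{\bw-\bu}_A^2$ is convex because $A\succeq 0$. If $\bu$ is already feasible, i.e.\ $|\bu^\top\bx|\le C$, then $\tau_C(\bu^\top\bx)=0$, the objective vanishes at $\bw=\bu$, and both branches of the claimed formula collapse to $\bw^*=\bu$; so I may assume $|\bu^\top\bx|>C$, in which case the optimum lies on the nearer face of the slab and exactly one half-space constraint is active.

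Writing the Lagrangian with multipliers $\lambda_1,\lambda_2\ge0$ for the two constraints and using $\nabla_{\bw}\tfrac12\norm{\bw-\bu}_A^2=A(\bw-\bu)$, stationarity gives $A(\bw^*-\bu)=\mu\bx$ with $\mu=\lambda_2-\lambda_1$, while complementary slackness forces $\bw^{*\top}\bx=C$ when $\bu^\top\bx>C$ (so $\mu\le0$) and $\bw^{*\top}\bx=-C$ when $\bu^\top\bx<-C$ (so $\mu\ge0$). The crux is solving $A(\bw^*-\bu)=\mu\bx$, which is exactly where the rank-deficiency dichotomy enters; recall that for symmetric $A$ the matrices $A^\dagger A=AA^\dagger$ are the orthogonal projector onto $\range(A)$, and $\eye{}-A^\dagger A$ projects onto $\ker(A)$. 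If $\bx\in\range(A)$, the equation is solvable and I take the particular solution $\bw^*-\bu=\mu A^\dagger\bx$; imposing the active-face value $\bu^\top\bx+\mu\,\bx^\top A^\dagger\bx=\bw^{*\top}\bx$ and solving for $\mu$ yields $\mu=-\tau_C(\bu^\top\bx)/(\bx^\top A^\dagger\bx)$, which reproduces the first branch, the denominator being strictly positive since $A^\dagger\succeq0$ with $\range(A^\dagger)=\range(A)\ni\bx\neq\bzero$. If instead $\bx\notin\range(A)$, then $\mu\bx\in\range(A)$ forces $\mu=0$, hence $A(\bw^*-\bu)=\bzero$ and the objective can be driven all the way to zero: I move from $\bu$ along the null-space direction $(\eye{}-A^\dagger A)\bx$, and choosing the step so the constraint becomes tight gives the second branch, whose denominator $\bx^\top(\eye{}-A^\dagger A)\bx=\norm{(\eye{}-A^\dagger A)\bx}_2^2$ is strictly positive precisely because $\bx\notin\range(A)$.

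Finally I would promote these stationary candidates to certified global minimizers by completing the square: for any feasible $\bw$,
\[
\norm{\bw-\bu}_A^2=\norm{\bw-\bw^*}_A^2+2(\bw-\bw^*)^\top A(\bw^*-\bu)+\norm{\bw^*-\bu}_A^2,
\]
and substituting $A(\bw^*-\bu)=\mu\bx$ turns the cross term into $2\mu\,(\bw^\top\bx-\bw^{*\top}\bx)$. In the $\bx\in\range(A)$ branch the signs arranged above (namely $\mu\le0$ with $\bw^{*\top}\bx=C\ge\bw^\top\bx$, and symmetrically for the other face) make this cross term nonnegative, giving $\norm{\bw-\bu}_A\ge\norm{\bw^*-\bu}_A$. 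In the $\bx\notin\range(A)$ branch the certification is immediate, since $\bw^*-\bu\in\ker(A)$ makes $\norm{\bw^*-\bu}_A=0$, the global minimum of a nonnegative objective; here one only checks that $\bw^{*\top}\bx=\bu^\top\bx-\tau_C(\bu^\top\bx)$ indeed has magnitude at most $C$. I expect the main obstacle to be the careful pseudoinverse bookkeeping in the rank-deficient case---establishing the $\range(A)$ versus $\ker(A)$ dichotomy, verifying that both denominators are nonzero, and tracking the sign of $\mu$ across the two faces---rather than the optimization argument, which is routine once the candidate is in hand. The non-uniqueness flagged in the statement is then explained by the freedom to add any $\ker(A)$ component to $\bw^*-\bu$ in the first branch without changing the objective.
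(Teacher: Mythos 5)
Your proposal is correct, and its skeleton coincides with the paper's own proof: both dispose of the trivial case $|\bu^\top\bx|\le C$ first, both obtain the candidate from Lagrangian stationarity $A(\bw^*-\bu)=\mu\bx$ together with complementary slackness, both use the pseudoinverse particular solution (the paper writes the general stationary point as $\bu-\lambda_1 A^\dagger\bx+(\eye{}-A^\dagger A)\bz$ and sets $\bz=\bzero$, which is exactly the kernel freedom you invoke to explain non-uniqueness), and the $\bx\notin\range(A)$ branch is handled identically (well-definedness of the denominator, feasibility, and zero objective). The one genuine difference is the optimality certificate in the $\bx\in\range(A)$ case: the paper plugs the stationary point back into the Lagrangian, maximizes the resulting dual function to get $\lambda_1=(\bu^\top\bx-C)/(\bx^\top A^\dagger\bx)$, and then compares against the dual value of the alternative slackness case ($\lambda_1=0$, which gives value $0$), so its conclusion leans on Lagrangian duality for convex programs. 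You instead certify the same candidate purely in the primal by completing the square, observing that the cross term $2\mu\left(\bw^\top\bx-\bw^{*\top}\bx\right)$ is nonnegative for every feasible $\bw$ given the sign of $\mu$. Your certification is more self-contained (no appeal to duality or KKT sufficiency), and it retroactively validates the guess of which face of the slab is active: once $\norm{\bw-\bu}_A\ge\norm{\bw^*-\bu}_A$ is established for all feasible $\bw$, the candidate is a global minimizer regardless of how it was produced. What the paper's dual computation buys in exchange is that comparing dual values across the two slackness cases settles which constraint is active without any separate geometric reasoning. Both routes need the same pseudoinverse facts --- positivity of $\bx^\top A^\dagger\bx$ when $\bzero\neq\bx\in\range(A)$, and of $\bx^\top(\eye{}-A^\dagger A)\bx$ when $\bx\notin\range(A)$ --- and you supply these correctly.
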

\begin{proof}
First consider the case when $\bx \in \range(A)$.
If $|\bu^\top\bx| \leq C$, then it is trivial that $\bw^* = \bu$.
We thus assume $\bu^\top\bx \geq C$ below (the last case $\bu^\top\bx \leq -C$ is similar). 
The Lagrangian of the problem is 
\[ L(\bw, \lambda_1, \lambda_2) = \frac{1}{2}(\bw-\bu)^\top A(\bw-\bu) + \lambda_1(\bw^\top\bx - C) + \lambda_2(\bw^\top\bx + C)\]
where $\lambda_1 \geq 0$ and $\lambda_2 \leq 0$ are Lagrangian multipliers. 
Since $\bw^\top\bx$ cannot be $C$ and $-C$ at the same time,
The complementary slackness condition implies that either $\lambda_1 = 0$ or $\lambda_2 = 0$.
Suppose the latter case is true, then setting the derivative with respect to $\bw$ to $0$, we get $\bw^* = \bu - \lambda_1 A^{\dagger}\bx + (\eye{} - A^\dagger A)\bz$ where $\bz \in R^{d \times 1}$ can be arbitrary.
However, since $A (\eye{} - A^\dagger A) = 0$, this part does not affect the objective value at all
and we can simply pick $z = 0$ so that $w^*$ has a consistent form regardless of whether $A$ is full rank or not.
Now plugging $\bw^*$ back, we have
\[ L(\bw^*, \lambda_1, 0) = -\frac{{\lambda_1}^2}{2}\bx^\top A^{\dagger} \bx + \lambda_1 (\bu^\top\bx - C) \]
which is maximized when $\lambda_1 = \frac{\bu^\top\bx - C}{\bx^\top A^{\dagger} \bx} \geq 0$. 
Plugging this optimal $\lambda_1$ into $\bw^*$ gives the stated solution.
On the other hand, if $\lambda_1 = 0$ instead, we can proceed similarly and verify that it gives a smaller dual value ($0$ in fact),
proving the previous solution is indeed optimal.

We now move on to the case when $\bx \notin \range(A)$. 
First of all the stated solution is well defined since $\bx^\top (\eye{} - A^{\dagger}A) \bx$ is nonzero in this case.
Moreover, direct calculation shows that $\bw^*$ is in the valid space: $|{\bw^*}^\top\bx| = |\bu^\top\bx - \tau_C(\bu^\top\bx)| \leq C$,
and also it gives the minimal possible distance value $\norm{\bw^*-\bu}_{A} = 0$,
proving the lemma.
\end{proof}


\section{Proof of Theorem~\ref{thm:AON}}\label{app:AON}

We first prove a general regret bound that holds for any choice of $A_t$ in update~\ref{eq:AON}:
\begin{equation*}
\begin{split}
\bu_{t+1} &= \bw_t - A_t^{-1}\bg_t \\
\bw_{t+1} &= \argmin_{\bw \in \scK_{t+1}} \norm{\bw-\bu_{t+1}}_{A_{t}}~.
\end{split}
\end{equation*}
This bound will also be useful in proving regret guarantees for the sketched versions.

\begin{prop}\label{prop:meta}
  For any sequence of positive definite matrices $A_t$ and sequence of losses satisfying
  Assumptions~\ref{ass:loss} and~\ref{ass:curvature}, the regret of
  updates~\eqref{eq:AON} against any comparator $\bw \in \scK$
  satisfies
  \begin{equation}
    2R_T(\bw) \le \|\bw\|_{A_0}^2 + \underbrace{\sum_{t=1}^T
      \bg_t^TA_t^{-1}\bg_t}_{\text{``Gradient Bound'' $R_G$}} \nonumber +
      \underbrace{\sum_{t=1}^T (\bw_t - \bw)^\top(A_t - A_{t-1} - \sigma_t \bg_t
    \bg_t^\top)(\bw_t - \bw)}_{\text{``Diameter Bound'' $R_D$}}
    \label{eq:regret-meta}
  \end{equation}
\end{prop}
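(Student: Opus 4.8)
The plan is to follow the standard online-Newton / FTRL-style analysis: linearize the losses via the curvature assumption, convert each inner product $\bg_t^\top(\bw_t-\bw)$ into squared $A_t$-distances through the primal update, control the projection with a Pythagorean inequality, and finally telescope across rounds to separate out the gradient bound $R_G$ and the diameter bound $R_D$.

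First I would invoke Assumption~\ref{ass:curvature} with $\bu=\bw_t$. Since $\bg_t=\nabla f_t(\bw_t)$, this gives for each round
$$f_t(\bw_t) - f_t(\bw) \le \bg_t^\top(\bw_t - \bw) - \frac{\sigma_t}{2}\bigl(\bg_t^\top(\bw_t - \bw)\bigr)^2.$$
Summing reduces everything to controlling $\sum_t \bg_t^\top(\bw_t-\bw)$, with the quadratic correction terms set aside to later form the $-\sigma_t\bg_t\bg_t^\top$ piece of $R_D$.

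Next, the weight update $\bu_{t+1} = \bw_t - A_t^{-1}\bg_t$ yields $\bg_t = A_t(\bw_t - \bu_{t+1})$. I would substitute this and apply the three-point (law of cosines) identity for the $A_t$-inner product: $2(\bw_t - \bu_{t+1})^\top A_t(\bw_t - \bw) = \norm{\bw_t - \bu_{t+1}}_{A_t}^2 + \norm{\bw_t - \bw}_{A_t}^2 - \norm{\bu_{t+1} - \bw}_{A_t}^2$. The first term equals $\bg_t^\top A_t^{-1}\bg_t$, which is exactly $R_G$. For the last term I would use the generalized Pythagorean inequality for the projection: because $\bw_{t+1}$ is the $A_t$-norm projection of $\bu_{t+1}$ onto the convex set $\scK_{t+1}$ and the comparator satisfies $\bw \in \scK \subseteq \scK_{t+1}$, we get $\norm{\bu_{t+1} - \bw}_{A_t}^2 \ge \norm{\bw_{t+1} - \bw}_{A_t}^2$. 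This is where the enlarged comparator set $\scK=\bigcap_t\scK_t$ is essential, since it guarantees $\bw$ lies in every $\scK_{t+1}$.

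The remaining work is the telescoping, which I expect to be the main obstacle precisely because the metric $A_t$ changes every round. I would split $\norm{\bw_t - \bw}_{A_t}^2 = \norm{\bw_t - \bw}_{A_{t-1}}^2 + (\bw_t - \bw)^\top(A_t - A_{t-1})(\bw_t - \bw)$, so that the paired distance terms $\norm{\bw_t - \bw}_{A_{t-1}}^2 - \norm{\bw_{t+1} - \bw}_{A_t}^2$ telescope cleanly while the matrix-difference quadratic forms accumulate. Summing from $t=1$ to $T$, dropping the nonpositive endpoint $-\norm{\bw_{T+1} - \bw}_{A_T}^2$, and noting that $\bu_1 = \bzero \in \scK_1$ forces $\bw_1 = \bzero$ so the leading term is $\norm{\bw}_{A_0}^2$, I obtain $2\sum_t \bg_t^\top(\bw_t - \bw) \le \norm{\bw}_{A_0}^2 + R_G + \sum_t (\bw_t - \bw)^\top(A_t - A_{t-1})(\bw_t - \bw)$. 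Finally, folding in the carried terms and writing $(\bg_t^\top(\bw_t - \bw))^2 = (\bw_t - \bw)^\top \bg_t\bg_t^\top(\bw_t - \bw)$ merges the matrix differences into $(\bw_t - \bw)^\top(A_t - A_{t-1} - \sigma_t\bg_t\bg_t^\top)(\bw_t - \bw)$, which is exactly $R_D$, completing the bound.
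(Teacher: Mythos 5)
Your proposal is correct and follows essentially the same argument as the paper's proof: the curvature assumption to linearize the regret, the generalized Pythagorean inequality for the $A_t$-projection onto $\scK_{t+1}$ (valid since $\bw \in \scK \subseteq \scK_{t+1}$), the expansion of $\norm{\bu_{t+1}-\bw}_{A_t}^2$ (your three-point identity is exactly this expansion), and the telescoping with the metric-change terms $(\bw_t-\bw)^\top(A_t - A_{t-1})(\bw_t-\bw)$ absorbed into $R_D$. Your explicit observation that $\bu_1 = \bzero \in \scK_1$ forces $\bw_1 = \bzero$ is a detail the paper leaves implicit, but otherwise the two proofs coincide.
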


\begin{proof}
  Since $\bw_{t+1}$ is the projection of $\bu_{t+1}$ onto $\scK_{t+1}$,
  by the property of projections (see for example~\citep[Lemma 8]{HazanKa12}), the algorithm ensures
  \[
  \norm{\bw_{t+1}-\bw}_{A_t}^2 \leq \norm{\bu_{t+1}-\bw}_{A_t}^2 
  =
  \norm{\bw_{t}-\bw}_{A_t}^2 + \bg_t^\top A_t^{-1} \bg_t - 2\bg_t^\top(\bw_t - \bw)
  \]
  for all $\bw\in\scK\ss\scK_{t+1}$. By
  the curvature property in Assumption~\ref{ass:curvature}, we then have that 
  \begin{align*}
    2R_T(\bw) \;&\leq\; \sum_{t=1}^T 2\bg_t^\top(\bw_t - \bw) -
    \sigma_t\bigl(\bg_t^\top(\bw_t - \bw)\bigr)^2 \\ 
    \;&\leq\; \sum_{t=1}^T \bg_t^\top A_t^{-1} \bg_t +
    \norm{\bw_{t}-\bw}_{A_t}^2 - \norm{\bw_{t+1}-\bw}_{A_t}^2 -
    \sigma_t\bigl(\bg_t^\top(\bw_t - \bw)\bigr)^2 \\ 
    \;&\leq\; \norm{\bw}_{A_0}^2 + \sum_{t=1}^T \bg_t^\top A_t^{-1} \bg_t  + 
  (\bw_t - \bw)^\top(A_t - A_{t-1} - \sigma_t \bg_t\bg_t^\top)(\bw_t -
    \bw),
\end{align*}
which completes the proof.
\end{proof}

\begin{proof}[Proof of Theorem~\ref{thm:AON}]

We apply Proposition~\ref{prop:meta} with the
choice: $A_0 = \alpha\eye{d}$ and $A_t = A_{t-1} + (\sigma_t +
\eta_t)\bg_t\bg_t^T$, which gives $\norm{\bw}_{A_0}^2 = \alpha\norm{\bw}_{2}^2$ and
\[
  R_D = \sum_{t=1}^T \eta_t (\bw_t - \bw)^\top  \bg_t\bg_t^\top (\bw_t - \bw) \leq 4(CL)^2 \sum_{t=1}^T \eta_t~,
\]
where the last equality uses the Lipschitz property in
Assumption~\ref{ass:loss} and the boundedness of $\bw_t^\top \bx_t$ and $\bw^\top \bx_t$.

For the term $R_G$, define $\Ahat_t = \frac{\alpha}{\sigma+\eta_T}\eye{d} + \sum_{s=1}^t
\bg_s \bg_s^\top$.
Since $\sigma_t \geq \sigma$ and $\eta_t$ is non-increasing, we have
$ \Ahat_t \preceq \frac{1}{\sigma+\eta_T} A_t$, and therefore:
\begin{align*}
R_G &\leq \frac{1}{\sigma+\eta_T}
\sum_{t=1}^T \bg_t^\top \Ahat_t^{-1} \bg_t = \frac{1}{\sigma+\eta_T}
\sum_{t=1}^T \inner{\Ahat_t - \Ahat_{t-1},\; \Ahat_t^{-1} } \\
&\leq \frac{1}{\sigma+\eta_T}  \sum_{t=1}^T
\ln\frac{|\Ahat_t|}{|\Ahat_{t-1}|} 
= \frac{1}{\sigma+\eta_T}  \ln\frac{|\Ahat_T|}{|\Ahat_{0}|} \\
&= \frac{1}{\sigma+\eta_T} \sum_{i=1}^d \ln\left(1 +
\frac{(\sigma+\eta_T)\lambda_i\Bigl(\sum_{t=1}^T \bg_t
  \bg_t^\top\Bigr)}{\alpha} \right) \\ 
&\leq \frac{d}{\sigma+\eta_T} \ln\left(1 +  \frac{(\sigma+\eta_T)
  \sum_{i=1}^d \lambda_i\Bigl(\sum_{t=1}^T \bg_t
  \bg_t^\top\Bigr)}{d\alpha} \right) \\ 
&= \frac{d}{\sigma + \eta_T} \ln\left(1 + \frac{(\sigma+\eta_T)
  \sum_{t=1}^T\norm{\bg_t}_2^2}{d\alpha}\right) 
\end{align*}
where the second inequality is by the concavity of the function
$\ln|X|$ (see~\citep[Lemma 12]{HazanAgKa07} for an alternative proof),
and the last one is by Jensen's inequality. 
This concludes the proof.
\end{proof}

\section{A Truly Invariant Algorithm}\label{app:pseudoinverse}
In this section we discuss how to make our adaptive online Newton algorithm truly invariant to invertible linear transformations.
To achieve this, we set $\alpha = 0$ and replace $A_t^{-1}$ with the Moore-Penrose pseudoinverse $A_t^{\dagger}$: 
\footnote{See Appendix~\ref{app:projection} for the closed form of the projection step.}
\begin{equation}\label{eq:pseudo_AON}
\begin{split}
\bu_{t+1} &= \bw_t - A_t^{\dagger}\bg_t, \\
\bw_{t+1} &= \argmin_{\bw \in \scK_{t+1}} \norm{\bw-\bu_{t+1}}_{A_t}~.
\end{split}
\end{equation}
When written in this form, it is not immediately clear that the algorithm has the invariant property. 
However, one can rewrite the algorithm in a mirror descent form:
\begin{align*}
\bw_{t+1} &= \argmin_{\bw \in \scK_{t+1}} \norm{\bw-\bw_t + A_t^{\dagger}\bg_t}_{A_t}^2 \\
&= \argmin_{\bw \in \scK_{t+1}} \norm{\bw-\bw_t}_{A_t}^2 + 2(\bw-\bw_t)^\top A_t A_t^{\dagger} \bg_t \\
&= \argmin_{\bw \in \scK_{t+1}} \norm{\bw-\bw_t}_{A_t}^2 + 2\bw^\top \bg_t
\end{align*}
where we use the fact that $\bg_t$ is in the range of $A_t$ in the last step.
Now suppose all the data $\bx_t$ are transformed to $M\bx_t$ for some unknown and invertible matrix $M$,
then one can verify that all the weights will be transformed to $M^{-T}\bw_t$ accordingly, ensuring the prediction to remain the same.

Moreover, the regret bound of this algorithm can be bounded as below. 
First notice that even when $A_t$ is rank deficient, the projection step still ensures the following:
$\norm{\bw_{t+1}-\bw}_{A_t}^2 \leq \norm{\bu_{t+1}-\bw}_{A_t}^2 $,
which is proven in \citep[Lemma~8]{HazanAgKa07}.
Therefore, the entire proof of Theorem~\ref{thm:AON} still holds after replacing $A_t^{-1}$ with $A_t^{\dagger}$, giving the regret bound:
\begin{equation}\label{eq:pseudoinverse_regret}
\frac{1}{2}\sum_{t=1}^T \bg_t^\top A_t^{\dagger} \ \bg_t  +  2(CL)^2 \eta_t~. 
\end{equation}
The key now is to bound the term $\sum_{t=1}^T \bg_t^\top \Ahat_t^{\dagger} \ \bg_t$
where we define $\Ahat_t = \sum_{s=1}^t \bg_s \bg_s^\top$.  In order
to do this, we proceed similarly to the proof
of~\citep[Theorem~4.2]{CesabianchiCoGe05} to show that this term is of
order $\scO(d^2\ln T)$ in the worst case.

\begin{theorem}\label{thm:pseudoinverse}
Let $\lambda^*$ be the minimum among the smallest nonzero eigenvalues of $\Ahat_t \; (t = 1, \ldots, T)$
and $r$ be the rank of $\Ahat_T$.
We have
\[ \sum_{t=1}^T \bg_t^\top \Ahat_t^{\dagger} \ \bg_t \leq r + 
\frac{(1+r)r}{2} \ln \left(1 + \frac{2 \sum_{t = 1}^T \norm{\bg_t}^2_2}{(1+r)r\lambda^*}  \right)~.
  \]
\end{theorem}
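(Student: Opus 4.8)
The plan is to classify each round $t$ according to whether $\bg_t$ enlarges the range of the running covariance, and to control the two classes of rounds separately. Writing $r_t = \mathrm{rank}(\Ahat_t)$, I would call $t$ an \emph{expansion} round if $\bg_t \notin \range(\Ahat_{t-1})$ (so $r_t = r_{t-1}+1$) and a \emph{flat} round otherwise (so $\range(\Ahat_t) = \range(\Ahat_{t-1})$, including the trivial case $\bg_t = \bzero$). Since $\Ahat_0 = \bzero$, $\Ahat_T$ has rank $r$, and each step raises the rank by at most one, there are exactly $r$ expansion rounds. The first key computation is to show that an expansion round contributes \emph{exactly} $1$: decomposing $\bg_t = \bg_t^{\parallel} + \bg_t^{\perp}$ into its projections onto $\range(\Ahat_{t-1})$ and its orthogonal complement, and working in a basis adapted to $\range(\Ahat_{t-1}) \oplus \mathrm{span}(\bg_t^{\perp})$, a short Schur/Sherman--Morrison argument gives $\bg_t^\top \Ahat_t^{\dagger}\bg_t = 1$ whenever $\bg_t^{\perp} \neq \bzero$. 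Summed over the $r$ expansion rounds this yields the leading term $r$.

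For the flat rounds I would use the standard log-determinant telescoping, but restricted to the currently active subspace. On a flat round $\bg_t \in \range(\Ahat_{t-1})$, so Sherman--Morrison inside that subspace gives $\bg_t^\top \Ahat_t^{\dagger}\bg_t = z_t/(1+z_t) \le \ln(1+z_t) = \ln(\mydet(\Ahat_t)/\mydet(\Ahat_{t-1}))$ with $z_t = \bg_t^\top \Ahat_{t-1}^{\dagger}\bg_t$ and $\mydet$ the product of the nonzero eigenvalues. The subtlety is that $\mydet$ also jumps on expansion rounds, so I would not telescope globally but phase by phase: between the $j$-th and $(j+1)$-th expansion the rank is frozen at $j$ on a fixed $j$-dimensional subspace $R_j = \range(\Ahat_{t_j})$, and summing the per-round logs over that phase collapses to $\ln(\mydet(\Ahat_{\mathrm{end}}|_{R_j})/\mydet(\Ahat_{t_j}|_{R_j})) = \sum_{i=1}^{j}\ln(\mu_{j,i}/\nu_{j,i})$, where $\mu_{j,i} \ge \nu_{j,i}$ are the $i$-th eigenvalues (by Weyl monotonicity) of the end- and start-of-phase matrices restricted to $R_j$.

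This produces exactly $\sum_{j=1}^{r} j = \tfrac{r(r+1)}{2}$ nonnegative log-terms. On each, since $\nu_{j,i}$ is a nonzero eigenvalue of $\Ahat_{t_j}$ it is at least $\lambda^*$, so $\ln(\mu_{j,i}/\nu_{j,i}) \le \ln(1 + (\mu_{j,i}-\nu_{j,i})/\lambda^*)$; here $\lambda^*$ plays the role that the regularizer $\alpha$ plays in Theorem~\ref{thm:AON}. Applying Jensen's inequality (concavity of $x \mapsto \ln(1 + x/\lambda^*)$) across all $\tfrac{r(r+1)}{2}$ terms bounds their sum by $\tfrac{r(r+1)}{2}\ln(1 + \tfrac{\sum_{j,i}(\mu_{j,i}-\nu_{j,i})}{(r(r+1)/2)\lambda^*})$. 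Finally, $\sum_i (\mu_{j,i}-\nu_{j,i})$ is the trace increase of the restricted matrix over phase $j$, namely $\sum_{t\in\text{phase }j}\norm{\Pi_{R_j}\bg_t}_2^2 \le \sum_{t\in\text{phase }j}\norm{\bg_t}_2^2$, so $\sum_{j,i}(\mu_{j,i}-\nu_{j,i}) \le \sum_{t=1}^T\norm{\bg_t}_2^2$; this turns $1/(r(r+1)/2)$ into the claimed $2/((1+r)r)$ and recovers the stated bound after adding the $r$ from the expansion rounds.

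The step I expect to be the main obstacle is the bookkeeping around rank transitions: establishing the exact identity $\bg_t^\top \Ahat_t^{\dagger}\bg_t = 1$ on expansion rounds for the Moore--Penrose pseudoinverse (an equality, not merely a bound), and making the phase-wise telescoping rigorous, so that every nonzero eigenvalue is born at a value at least $\lambda^*$ and the eigenvalue indices line up through Weyl's inequalities to give precisely $\tfrac{r(r+1)}{2}$ terms. The remaining manipulations are routine applications of Sherman--Morrison, the matrix determinant lemma, and Jensen's inequality.
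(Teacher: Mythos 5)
Your proposal is correct and follows essentially the same route as the paper's proof: split rounds into rank-expanding ones (each contributing exactly $1$, for a total of $r$, via the pseudoinverse identity the paper cites as Lemma~D.1 of Cesa-Bianchi et al.) and flat ones (bounded by telescoping $\ln\mydet$ within each fixed-rank phase), then lower-bound the phase-start eigenvalues by $\lambda^*$, apply Jensen over the $\tfrac{(1+r)r}{2}$ log-terms, and convert the eigenvalue increments to $\sum_t\norm{\bg_t}_2^2$ by a trace argument. The only differences are cosmetic: you prove the expansion/flat identities inline via Schur/Sherman--Morrison instead of citing them, and you phrase the flat-round bound as $z_t/(1+z_t)\le\ln(1+z_t)$ rather than as $1-\mydet(\Ahat_{t-1})/\mydet(\Ahat_t)\le\ln\bigl(\mydet(\Ahat_t)/\mydet(\Ahat_{t-1})\bigr)$, which are the same statement by the matrix determinant lemma restricted to the range.
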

\begin{proof}
First by \citet[Lemma~D.1]{CesabianchiCoGe05}, we have
\[
    \bg_t^\top \Ahat_t^{\dagger} \ \bg_t
= 
    \left\{ \begin{array}{cl}
        1 & \text{if $\bg_t \notin \range(\Ahat_{t-1})$}
    \\
        1 - \frac{\mydet(\Ahat_{t-1})}{\mydet(\Ahat_{t})} < 1 & \text{if $\bg_t \in \range(\Ahat_{t-1})$}
    \end{array} \right.
\]
where $\mydet(M)$ denotes the product of the nonzero eigenvalues of matrix $M$.  
We thus separate the steps $t$ such that $\bg_t \in \range(\Ahat_{t-1})$ from those where $\bg_t \notin \range(\Ahat_{t-1})$.
For each $k=1,\dots,r$ let $T_k$ be the first time step $t$ in which the rank of $A_t$ is $k$ (so that $T_1=1$). Also let $T_{r+1} = T+1$ for convenience. With this notation, we have
\begin{align*}
    \sum_{t=1}^T \bg_{t}^{\top} \Ahat_t^{\dagger} \ \bg_{t} 
&= 
    \sum_{k=1}^r \left( \bg_{T_k}^{\top} \Ahat_{T_k}^{\dagger} \bg_{T_k}
+ 
    \sum_{t = T_k+1}^{T_{k+1}-1} \bg_{t}^{\top} \Ahat_t^{\dagger} \ \bg_{t}\right)
\\ &=
    \sum_{k = 1}^r \left(1 + \sum_{t = T_k+1}^{T_{k+1}-1} \left(1-\frac{\mydet(\Ahat_{t-1})}{\mydet(\Ahat_t)}\right) \right)
\\&=
    r + \sum_{k = 1}^r \sum_{t = T_k+1}^{T_{k+1}-1} \left(1-\frac{\mydet(\Ahat_{t-1})}{\mydet(\Ahat_t)}\right)
\\ &\le
    r + \sum_{k = 1}^r \sum_{t = T_k+1}^{T_{k+1}-1} \ln \frac{\mydet(\Ahat_t)}{\mydet(\Ahat_{t-1})}
\\&=
    r + \sum_{k = 1}^r \ln \frac{\mydet(\Ahat_{T_{k+1}-1})}{\mydet(\Ahat_{T_k})}~.
\end{align*}
Fix any $k$ and let $\lambda_{k,1},\dots,\lambda_{k,k}$ be the nonzero eigenvalues of $\Ahat_{T_k}$ and $\lambda_{k,1}+\mu_{k,1},\dots,\lambda_{k,k}+\mu_{k,k}$ be the nonzero eigenvalues of $\Ahat_{T_{k+1}-1}$. Then 
\begin{align*}
    \ln \frac{\mydet(\Ahat_{T_{k+1}-1})}{\mydet(\Ahat_{T_k})}
=
    \ln \prod_{i=1}^k \frac{\lambda_{k,i}+\mu_{k,i}}{\lambda_{k,i}}
=
    \sum_{i=1}^k \ln \left(1+\frac{\mu_{k,i}}{\lambda_{k,i}}\right)~.
\end{align*}
Hence, we arrive at
\begin{align*}
    \sum_{t=1}^T \bg_{t}^{\top} \Ahat_t^{+}\bg_{t}
\le
    r + \sum_{k = 1}^r \sum_{i=1}^k \ln \left(1+\frac{\mu_{k,i}}{\lambda_{k,i}}\right)~.
\end{align*}
To further bound the latter quantity, we use $\lambda^* \leq \lambda_{k,i}$ and Jensen's inequality :
\begin{align*}
    \sum_{k = 1}^r \sum_{i=1}^k \ln \left(1+\frac{\mu_{k,i}}{\lambda_{k,i}}\right) 
&\leq  
    \sum_{k = 1}^r \sum_{i=1}^k \ln \left(1+\frac{\mu_{k,i}}{\lambda^*}\right) 
\\&\leq 
    \frac{(1+r)r}{2} \ln \left(1 + \frac{2 \sum_{k = 1}^r \sum_{i=1}^k \mu_{k,i}}{(1+r)r\lambda^*}  \right)~.
\end{align*}
Finally noticing that 
\[ \sum_{i=1}^k \mu_{k,i} = \trace{\Ahat_{T_{k+1}-1}} - \trace{\Ahat_{T_k}} 
= \sum_{t = T_k + 1}^{T_{k+1}-1} \trace{\bg_t \bg_t^\top} = \sum_{t = T_k + 1}^{T_{k+1}-1} \norm{\bg_t}^2_2
\]
completes the proof.
\end{proof}
Taken together, Eq.~\eqref{eq:pseudoinverse_regret} and Theorem~\ref{thm:pseudoinverse} lead to the following regret bounds (recall the definitions of $\lambda^*$ and $r$ from Theorem~\ref{thm:pseudoinverse}).
\begin{cor}
If $\sigma_t = 0$ for all $t$ and $\eta_t$ is set to be $\frac{1}{CL}\sqrt{\frac{d}{t}}$, then the regret of the algorithm defined by Eq.~\eqref{eq:pseudo_AON} is at most
\[
\frac{CL}{2}\sqrt{\frac{T}{d}}  \left(r + 
\frac{(1+r)r}{2} \ln \left(1 + \frac{2 \sum_{t = 1}^T \norm{\bg_t}^2_2}{(1+r)r\lambda^*}  \right)\right) + 4CL\sqrt{Td} .
\]
On the other hand, if $\sigma_t \geq \sigma > 0$ for all $t$ and $\eta_t$ is set to be $0$, then the regret is at most
\[  \frac{1}{2\sigma} \left(r + 
\frac{(1+r)r}{2} \ln \left(1 + \frac{2 \sum_{t = 1}^T \norm{\bg_t}^2_2}{(1+r)r\lambda^*}  \right)\right)~.
\]
\end{cor}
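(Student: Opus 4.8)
The plan is to derive both bounds by combining the generic regret guarantee in Eq.~\eqref{eq:pseudoinverse_regret}, namely $R_T(\bw)\le\tfrac12\sum_{t=1}^T\bg_t^\top A_t^\dagger\bg_t+2(CL)^2\sum_{t=1}^T\eta_t$, with the bound on $\sum_t\bg_t^\top\Ahat_t^\dagger\bg_t$ from Theorem~\ref{thm:pseudoinverse}. The bridge between the two is a comparison of $A_t=\sum_{s=1}^t(\sigma_s+\eta_s)\bg_s\bg_s^\top$ (taking $\alpha=0$) with $\Ahat_t=\sum_{s=1}^t\bg_s\bg_s^\top$. The crucial structural fact I would use is that both are nonnegative combinations of the \emph{same} rank-one terms $\bg_s\bg_s^\top$ ($s\le t$), so they have identical range $R_t\defeq\range(A_t)=\range(\Ahat_t)=\mathrm{span}\{\bg_s:s\le t\}$, which in particular contains $\bg_t$.

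First I would establish a PSD lower bound $A_t\succeq c_t\,\Ahat_t$ separately in each regime. In the first case $\sigma_t=0$ and $\eta_t=\tfrac{1}{CL}\sqrt{d/t}$ is non-increasing, so every coefficient satisfies $\eta_s\ge\eta_t$ for $s\le t$, giving $A_t\succeq\eta_t\,\Ahat_t$. In the second case $\eta_t=0$ and $\sigma_s\ge\sigma$, giving $A_t\succeq\sigma\,\Ahat_t$.

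The main technical step is converting this ordering into the quadratic-form inequality $\bg_t^\top A_t^\dagger\bg_t\le\tfrac{1}{c_t}\bg_t^\top\Ahat_t^\dagger\bg_t$. I would argue on the common subspace $R_t$: restricted to $R_t$ both operators are genuinely positive definite, so $A_t|_{R_t}\succeq c_t\,\Ahat_t|_{R_t}$ inverts to $A_t^\dagger\preceq\tfrac{1}{c_t}\Ahat_t^\dagger$ as operators on $R_t$, and evaluating at $\bg_t\in R_t$ gives the claim. In the first case I would then bound $\tfrac{1}{\eta_t}\le\tfrac{1}{\eta_T}=CL\sqrt{T/d}$ (valid since $1/\eta_t$ is non-decreasing), pull the uniform factor out of the sum, and invoke Theorem~\ref{thm:pseudoinverse}; this yields the leading term $\tfrac{CL}{2}\sqrt{T/d}\bigl(r+\tfrac{(1+r)r}{2}\ln(\cdots)\bigr)$. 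The residual term is handled by $\sum_{t=1}^T\eta_t=\tfrac{\sqrt d}{CL}\sum_{t=1}^T t^{-1/2}\le\tfrac{2\sqrt{dT}}{CL}$, so $2(CL)^2\sum_t\eta_t\le 4CL\sqrt{Td}$. In the second case $c_t=\sigma$ is already uniform and $\eta_t=0$, so the residual term vanishes and I directly obtain $\tfrac{1}{2\sigma}\bigl(r+\tfrac{(1+r)r}{2}\ln(\cdots)\bigr)$.

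The hard part will be the pseudoinverse monotonicity in the previous paragraph: the naive implication $A\succeq B\Rightarrow A^\dagger\preceq B^\dagger$ is \emph{false} in general once the matrices are singular, and it holds here only because $A_t$ and $\Ahat_t$ share the range $R_t$ and are positive definite on it. Everything else is routine algebra: substituting the two stepsize schedules and applying the elementary bound $\sum_{t\le T}t^{-1/2}\le2\sqrt T$.
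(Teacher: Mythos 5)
Your proposal is correct and follows essentially the same route as the paper, which obtains the corollary by combining the regret bound of Eq.~\eqref{eq:pseudoinverse_regret} with Theorem~\ref{thm:pseudoinverse} via the comparison $A_t \succeq (\sigma+\eta_T)\Ahat_t$ (the paper states this combination without spelling out the details). Your explicit justification of the pseudoinverse monotonicity step --- valid here precisely because $A_t$ and $\Ahat_t$ are nonnegative combinations of the same rank-one terms and hence share a range on which both are positive definite --- is a point the paper leaves implicit, and you handle it correctly.
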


\section{Proof of Theorem~\ref{thm:FD}}\label{app:FD}
\begin{proof}
We again first apply Proposition~\ref{prop:meta} 
(recall the notation $R_G$ and $R_D$ stated in the proposition).
By the construction of the sketch, we have 
\[A_t - A_{t-1} = S_t^\top
S_t - S_{t-1}^\top S_{t-1} = \bghat_t\bghat_t^\top - \rho_t V_t^\top
V_t \preceq \bghat_t\bghat_t^\top.\]  

It follows immediately that $R_D$ is again at most $4(CL)^2
\sum_{t=1}^T \eta_t$.  For the term $R_G$, we will apply the following
guarantee of Frequent Directions (see the proof of Theorem~1.1 of
\citep{GhashamiLiPhWo15}): $ \sum_{t=1}^T \rho_t \leq
\frac{\Omega_k}{m - k}. $ Specifically, since $\trace{V_t A_t^{-1}
  V_t^\top} \leq \frac{1}{\alpha}\trace{V_t V_t^\top} =
\frac{m}{\alpha}$ we have
\begin{align*}
R_G &= \sum_{t=1}^T \frac{1}{\sigma_t + \eta_t} \inner{A_t^{-1}, A_t -
  A_{t-1} + \rho_t V_t^\top V_t} \\ 
&\leq  \frac{1}{\sigma + \eta_T} \sum_{t=1}^T  \left( \inner{A_t^{-1},
  A_t - A_{t-1} + \rho_t V_t^\top V_t} \right) \\ 
&=  \frac{1}{\sigma + \eta_T} \sum_{t=1}^T  \left( \inner{A_t^{-1},
  A_t - A_{t-1}} +  \rho_t \trace{V_t A_t^{-1} V_t^\top} \right) \\ 
&\leq \frac{1}{(\sigma + \eta_T)} \sum_{t=1}^T  \inner{A_t^{-1}, A_t -
  A_{t-1}}  + \frac{m\Omega_k}{(m-k)(\sigma+\eta_T)\alpha}~.  
\end{align*}
Finally for the term $\sum_{t=1}^T \inner{A_t^{-1}, A_t - A_{t-1}}$,
we proceed similarly to the proof of Theorem~\ref{thm:AON}:
\begin{align*}
\sum_{t=1}^T  \inner{A_t^{-1}, A_t - A_{t-1}} &\leq \sum_{t=1}^T \ln \frac{|A_t|}{|A_{t-1}|} 
= \ln \frac{|A_T|}{|A_{0}|} = \sum_{i=1}^d \ln \left(1 +
\frac{\lambda_i(S_T^\top S_T)}{\alpha} \right) \\ 
&= \sum_{i=1}^m \ln \left(1 + \frac{\lambda_i(S_T^\top S_T)}{\alpha} \right) 
\leq m  \ln\left(1 + \frac{\trace{S_T^\top S_T}}{m\alpha}\right)  
\end{align*}
where the first inequality is by the concavity of the function
$\ln|X|$, the second one is by Jensen's inequality, and the last
equality is by the fact that $S_T^\top S_T$ is of rank $m$ and thus
$\lambda_i(S_T^\top S_T) = 0$ for any $i > m$.  This concludes the
proof.
\end{proof}

\section{Sparse updates for FD sketch}
\label{app:sparse}

The sparse version of our algorithm with the Frequent Directions
option is much more involved.  We begin by taking a detour and
introducing a fast and epoch-based variant of the Frequent Directions
algorithm proposed in~\citep{GhashamiLiPhWo15}.  The idea is the
following: instead of doing an eigendecomposition immediately after
inserting a new $\bghat$ every round, we double the size of the sketch
(to $2m$), keep up to $m$ recent $\bghat$'s, do the decomposition only
at the end of every $m$ rounds and finally keep the top $m$
eigenvectors with shrunk eigenvalues.  The advantage of this variant
is that it can be implemented straightforwardly in $\scO(md)$ time on
average without doing a complicated rank-one SVD update, while still
ensuring the exact same guarantee with the only price of doubling the
sketch size.

Algorithm~\ref{alg:FD_epoch} shows the details of this variant and how
we maintain $H$.  The sketch $S$ is always represented by two parts:
the top part ($DV$) comes from the last eigendecomposition, and the bottom
part ($G$) collects the recent to-sketch vector $\bghat$'s.  Note that
within each epoch, the update of $H^{-1}$ is a rank-two update and
thus $H$ can be updated efficiently using Woodbury formula
(Lines~\ref{line:update_H_1} and~\ref{line:update_H_2} of
Algorithm~\ref{alg:FD_epoch}).

\begin{algorithm}[h!]
\caption{Frequent Direction Sketch (epoch version)}
\label{alg:FD_epoch}
\begin{algorithmic}[1]

\Internal $\tau, D, V, G$ and $H$. 

\vspace{5pt}
\Init{$\alpha, m$}
\State Set $\tau = 1, D = \bzero_{m\times m}, G = \bzero_{m\times d}, H = \tfrac{1}{\alpha} \eye{2m}$ and let $V$ be any $m \times d$ matrix whose rows are orthonormal.
\State Return $(\bzero_{2m \times d}, H)$.

\vspace{5pt}
\setcounter{ALG@line}{0}
\Update{$\bghat$}
    \State Insert $\bghat$ into the $\tau$-th row of $G$.	

    \If{$\tau < m$}
	
	\State Let $\be$ be the $2m \times 1$ basis vector whose $(m + \tau)$-th entry is 1 and $\bq = S\bghat - \tfrac{\bghat^\top\bghat}{2}\be$. \label{line:update_H_1}
	\State Update $H \leftarrow H - \frac{H \bq \be^\top H}{1 + \be^\top H \bq}$ and $H \leftarrow H - \frac{H \be \bq^\top H}{1 + \bq^\top H\be}$. \label{line:update_H_2}
         \State Update $\tau \leftarrow \tau + 1$.
    \Else
        \State $(V, \Sigma) \leftarrow \textbf{ComputeEigenSystem}\left(\left( \begin{array}{c}  DV \\ G \end{array} \right)\right)$ (Algorithm~\ref{alg:eigen}). \label{alg:FD:eigen}
        \State Set $D$ to be a diagonal matrix with $D_{i,i} = \sqrt{\Sigma_{i,i} - \Sigma_{m, m}}, \; \forall i \in [m]$.        
        \State Set $H \leftarrow \diag{\frac{1}{\alpha + D_{1,1}^2},  \cdots, \frac{1}{\alpha + D_{m,m}^2}, \frac{1}{\alpha}, \ldots, \frac{1}{\alpha}} $.        
        \State Set $ G = \bzero_{m \times d}$.
        \State Set $\tau = 1$.
    \EndIf
    
    \State Return $\left(\left( \begin{array}{c}  DV \\ G \end{array} \right), H\right) $   .
\end{algorithmic}
\end{algorithm}

Although we can use any available algorithm that runs in $\scO(m^2 d)$
time to do the eigendecomposition (Line~\ref{alg:FD:eigen} in
Algorithm~\ref{alg:FD_epoch}), we explicitly write down the procedure
of reducing this problem to eigendecomposing a small square matrix in
Algorithm~\ref{alg:eigen}, which will be
important for deriving the sparse version of the algorithm.
Lemma~\ref{lem:eigen} proves that
Algorithm~\ref{alg:eigen} works correctly for finding the top $m$
eigenvector and eigenvalues.

\begin{algorithm}[h!]
\caption{ComputeEigenSystem$(S)$}
\label{alg:eigen}
\begin{algorithmic}[1]
\Require $S = \left( \begin{array}{c}  DV \\ G \end{array} \right)$.
\Ensure $V' \in \R^{m \times d}$  and diagonal matrix $\Sigma \in \R^{m \times m}$ such that the $i$-th row of $V'$ and the $i$-th entry  of the diagonal of $\Sigma$ are the $i$-th eigenvector and eigenvalue of $S^\top S$ respectively.
\State Compute $M = GV^\top$.
\State Decompose $G - MV$ into the form $LQ$ where $L \in \R^{m \times r}$, $Q$ is a $r \times d$ matrix whose rows are orthonormal and $r$ is the rank of $G - MV$ (e.g. by a Gram-Schmidt process). \label{alg:eigen:LQ}
\State Compute the top $m$ eigenvectors ($U \in \R^{m \times (m + r)}$) and eigenvalues ($\Sigma \in \R^{m \times m }$) of the matrix
$\left( \begin{array}{cc}  D^2 & \bzero_{m \times r} \\ \bzero_{r \times m} & \bzero_{r \times r} \end{array} \right) +  \left( \begin{array}{c}  M^\top \\ L^\top \end{array} \right) \left( \begin{array}{cc}  M & L \end{array} \right) $.
\State Return $(V', \Sigma)$ where $V' = U\left( \begin{array}{c}  V \\ Q \end{array} \right)$.
\end{algorithmic}
\end{algorithm}

\begin{lemma}\label{lem:eigen}
The outputs of Algorithm~\ref{alg:eigen} are such that the $i$-th row of $V'$ and the $i$-th entry  of the diagonal of $\Sigma$ are the $i$-th eigenvector and eigenvalue of $S^\top S$ respectively.
\end{lemma}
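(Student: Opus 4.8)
The plan is to exhibit an explicit factorization $S = BW$ in which $W$ has orthonormal rows and $B$ is a small $2m\times(m+r)$ matrix, so that the $d\times d$ eigenproblem for $S^\top S$ reduces to the small $(m+r)\times(m+r)$ eigenproblem for $B^\top B$ that the algorithm actually diagonalizes. First I would analyze the decomposition computed inside the algorithm. Since the rows of $V$ are orthonormal, $VV^\top=\eye{m}$, so $M=GV^\top$ collects the coordinates of the rows of $G$ in the row space of $V$, and $G-MV$ is the orthogonal residual: indeed $(G-MV)V^\top = GV^\top - M(VV^\top) = M-M = \bzero$. Writing $G-MV=LQ$ as in Line~\ref{alg:eigen:LQ}, the rows of $Q$ span the row space of $G-MV$ and are therefore orthogonal to the rows of $V$; combined with the orthonormality of $Q$'s own rows, this shows that $W \defeq \left(\begin{smallmatrix} V \\ Q \end{smallmatrix}\right)$ has orthonormal rows, i.e.\ $WW^\top=\eye{m+r}$.

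Next I would assemble the factorization. From $G = MV+LQ$ we obtain
\[
 S = \begin{pmatrix} DV \\ G \end{pmatrix}
   = \begin{pmatrix} D & \bzero \\ M & L \end{pmatrix}\begin{pmatrix} V \\ Q \end{pmatrix}
   = BW, \qquad B \defeq \begin{pmatrix} D & \bzero \\ M & L \end{pmatrix}.
\]
A direct block computation then gives
\[
 B^\top B = \begin{pmatrix} D^2+M^\top M & M^\top L \\ L^\top M & L^\top L \end{pmatrix}
 = \begin{pmatrix} D^2 & \bzero \\ \bzero & \bzero \end{pmatrix}
 + \begin{pmatrix} M^\top \\ L^\top \end{pmatrix}\begin{pmatrix} M & L \end{pmatrix},
\]
which is exactly the $(m+r)\times(m+r)$ matrix whose top eigensystem is computed in Step~3 of the algorithm.

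Finally I would transfer the eigensystem back to $S^\top S$. Writing a full eigendecomposition $B^\top B=\sum_i\sigma_i u_i u_i^\top$ with orthonormal $u_i\in\R^{m+r}$, we have $S^\top S = W^\top(B^\top B)W = \sum_i \sigma_i (W^\top u_i)(W^\top u_i)^\top$. Because $WW^\top=\eye{m+r}$, the vectors $W^\top u_i$ are orthonormal, so this is a valid reduced eigendecomposition of $S^\top S$: its nonzero eigenvalues are precisely the $\sigma_i$ (the remaining $d-(m+r)$ eigenvalues being $0$) and the associated eigenvectors are $W^\top u_i$. Since every $\sigma_i\ge 0$, the top $m$ eigenvalues of $S^\top S$ coincide with the top $m$ eigenvalues of $B^\top B$, with eigenvectors $W^\top u_i$. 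As the rows of the returned $U$ are exactly the top $m$ vectors $u_i^\top$, the rows of $V'=U\left(\begin{smallmatrix} V \\ Q \end{smallmatrix}\right)=UW$ are $u_i^\top W=(W^\top u_i)^\top$, i.e.\ the desired top $m$ eigenvectors of $S^\top S$, paired with the eigenvalues stored in $\Sigma$.

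The main obstacle is this last transfer step: one must argue that passing from $B^\top B$ to $S^\top S = W^\top(B^\top B)W$ through a matrix $W$ with orthonormal \emph{rows} (so $WW^\top=\eye{m+r}$ but $W^\top W\ne\eye{d}$) preserves the entire nonzero spectrum and maps eigenvectors by the rule $u\mapsto W^\top u$. The two facts that make this clean are the orthonormality $WW^\top=\eye{m+r}$ of the combined rows of $V$ and $Q$, which guarantees the $W^\top u_i$ are again orthonormal, and the nonnegativity of the eigenvalues of $B^\top B$, which ensures the ``top $m$'' selection performed on the small matrix agrees with the ``top $m$'' selection on $S^\top S$.
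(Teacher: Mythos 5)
Your proof is correct, and it rests on the same core identity as the paper's own proof: the stacked rows of $V$ and $Q$ form a matrix with orthonormal rows, and $S^\top S$ is the conjugation of the small matrix $\left(\begin{smallmatrix} D^2 & \bzero \\ \bzero & \bzero \end{smallmatrix}\right) + \left(\begin{smallmatrix} M^\top \\ L^\top \end{smallmatrix}\right)\left(\begin{smallmatrix} M & L \end{smallmatrix}\right)$ by that semi-orthogonal matrix. The route you take to this identity differs in presentation: the paper completes $(V^\top \; Q^\top)$ with an orthonormal basis of its null space (which it also calls $W^\top$ --- not to be confused with your $W = \left(\begin{smallmatrix} V \\ Q \end{smallmatrix}\right)$) to a full orthogonal matrix $E$ of $\R^d$, inserts $EE^\top$ around $G^\top G$, and reads off the block structure using $GW^\top = \bzero$ and $GQ^\top = L$; you instead factor $S = BW$ directly from $G = MV + LQ$, which avoids the basis completion entirely. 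The more substantive difference is in the last step: the paper asserts that eigenpairs of the small matrix transfer to eigenpairs of $S^\top S$ under left-multiplication by $(V^\top \; Q^\top)$ as something that ``is clear,'' whereas you actually prove it --- $WW^\top = \eye{m+r}$ makes the vectors $W^\top u_i$ orthonormal, any vector orthogonal to all of them lies in the kernel of $W$ and hence of $S^\top S$, and nonnegativity of the $\sigma_i$ guarantees the top-$m$ selection on the small matrix agrees with that on $S^\top S$. So your write-up is a somewhat more self-contained and rigorous rendering of the same argument; what the paper's version buys in exchange is a very short computation once the full basis $E$ is in hand.
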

\begin{proof}
Let $W^\top \in \R^{d \times (d - m - r) }$ be an orthonormal basis of the null space of $\left(\begin{array}{c} V \\  Q \end{array}\right)$.
By Line~\ref{alg:eigen:LQ}, we know that $GW^\top = \bzero$ and $E = (V^\top \;  Q^\top \; W^\top)$ forms an orthonormal basis of $\R^d$. 
Therefore, we have
\begin{align*}
S^\top S &= V^\top D^2 V + G^\top G  \\
&= E \left(\begin{array}{ccc} D^2 & \bzero & \bzero \\ \bzero & \bzero & \bzero \\ \bzero & \bzero & \bzero \end{array}\right) E^\top + E E^\top G^\top G E E^\top \\
&= E \left(  \left(\begin{array}{ccc} D^2 & \bzero & \bzero \\ \bzero & \bzero & \bzero \\ \bzero & \bzero & \bzero \end{array}\right)
+   \left(\begin{array}{c} VG^\top \\ QG^\top \\ WG^\top \end{array}\right) (GV^\top \; G Q^\top \; G W^\top)    \right) E^\top \\
&=  (V^\top \; Q^\top) \underbrace{\left( \left( \begin{array}{cc}  D^2 & \bzero \\ \bzero & \bzero \end{array} \right) +  \left( \begin{array}{c}  M^\top \\ L^\top \end{array} \right) \left( \begin{array}{cc}  M & L \end{array} \right)\right) }_{= C}  
 \left(\begin{array}{c} V \\ Q \end{array}\right)
\end{align*}
where in the last step we use the fact $GQ^\top = (MV + LQ)Q^\top = L$. 
Now it is clear that the eigenvalue of $C$ will be the eigenvalue of $S^\top S$ and the eigenvector of $C$ will be the eigenvector of $S^\top S$ after left multiplied by matrix $(V^\top \; Q^\top)$, completing the proof.
\end{proof}

We are now ready to present the sparse version of \alg with Frequent Direction sketch
(Algorithm~\ref{alg:SFDN}).  The key point is that we represent $V_t$ as
$F_t Z_t$ for some $F_t \in \R^{m \times m}$ and $Z_t \in \R^{m \times d}$, and
the weight vector $\bw_t$ as $\bbw_t + Z_{t-1}^\top \bb_t$ and ensure that the update of 
$Z_t$ and $\bbw_t$ will always be sparse. 
To see this, denote the sketch $S_t$ by $\left( \begin{array}{c}  D_t F_t Z_t
  \\ G_t \end{array} \right) $ and let $H_{t,1}$ and $H_{t,2}$ be the top and bottom half of $H_t$.
Now the update rule of $\bu_{t+1}$ can be rewritten as
\begin{align*}
\bu_{t+1} &= \bw_t - \big(\eye{d} - S_t^\top H_t S_t\big)\tfrac{\bg_t}{\alpha}  \\
&= \bbw_t + Z_{t-1}^\top \bb_t - \frac{1}{\alpha} \bg_t + \frac{1}{\alpha} (Z_t^\top F_t^\top D_t, G_t^\top) \left( \begin{array}{c}  H_{t,1} S_t \bg_t
  \\ H_{t,2} S_t \bg_t \end{array} \right) \\
  &= \underbrace{\bbw_t + \frac{1}{\alpha} (G_t^\top H_{t,2} S_t \bg_t - \bg_t) - (Z_t - Z_{t-1})^\top \bb_t}_{\bbu_{t+1}}
  + Z_t^\top\underbrace{ (\bb_t + \frac{1}{\alpha}F_t^\top D_t H_{t,1}S_t \bg_t ) }_{\bb'_{t+1}}
\end{align*}
We will show that $Z_t - Z_{t-1} = \Delta_t G_t$ for some $\Delta_t \in \R^{m \times m}$ shortly, and thus
the above update is efficient due to the fact that the rows of $G_t$ are collections of previous sparse vectors $\bghat$.

Similarly, the update of $\bw_{t+1}$ can be written as
\begin{align*}
\bw_{t+1} &= \bu_{t+1} - \scale_t (\bx_{t+1} - S_t^\top H_t S_t \bx_{t+1}) \\
&=  \bbu_{t+1} + Z_t^\top \bb'_{t+1} - \scale_t \bx_{t+1} + \scale_t (Z_t^\top F_t^\top D_t, G_t^\top) \left( \begin{array}{c}  H_{t,1} S_t \bx_{t+1}
  \\ H_{t,2} S_t \bx_{t+1} \end{array} \right)  \\
&= \underbrace{\bbu_{t+1} + \scale_t  (G_t^\top H_{t,2}S_t \bx_{t+1}  - \bx_{t+1})}_{\bbw_{t+1}} + Z_t^\top \underbrace{(\bb'_{t+1} + \scale_t F_t^\top D_t H_{t,1} S_t \bx_{t+1})}_{\bb_{t+1}} .
\end{align*}
It is clear that $\scale_t$ can be computed efficiently, and thus the update of $\bw_{t+1}$ is also efficient.
These updates correspond to Line~\ref{alg:SFDN:projection} and~\ref{alg:SFDN:weight_update} of Algorithm~\ref{alg:SFDN}. 

It remains to perform the sketch update efficiently.
Algorithm~\ref{alg:SFD} is the sparse version of Algorithm~\ref{alg:FD_epoch}.
The challenging part is to compute eigenvectors and eigenvalues efficiently.
Fortunately, in light of Algorithm~\ref{alg:eigen}, using the new representation $V = FZ$
one can directly translate the process to Algorithm~\ref{alg:sparse_eigen}
and find that the eigenvectors can be expressed in the form $N_1 Z + N_2 G$.
To see this, first note that Line 1 of both algorithms compute the same matrix $M = GV^\top = GZ^\top F^\top$.
Then Line~\ref{alg:eigen:decompose} decomposes the matrix
\[
G - MV = G - MFZ = \left(\begin{array} {cc} -MF & \eye{m} \end{array} \right) \left( \begin{array}{c}  Z \\ G \end{array} \right)
\defeq PR
\]
using Gram-Schmidt into the form $LQR$ such that the rows of $QR$ are orthonormal 
(that is, $QR$ corresponds to $Q$ in Algorithm~\ref{alg:eigen}).
While directly applying Gram-Schmidt to $PR$ would take $\scO(m^2 d)$ time, 
this step can in fact be efficiently implemented by performing Gram-Schmidt to $P$ (instead of $PR$) in a Banach space 
where inner product is defined as $\langle \ba, \bb \rangle = \ba^\top K \bb$ 
with 
\[ 
K = RR^\top = \left(\begin{array} {cc} ZZ^\top & ZG^\top \\ GZ^\top & GG^\top \end{array}\right)
\] 
being the Gram matrix of $R$.
Since we can efficiently maintain the Gram matrix of $Z$ (see Line~\ref{alg:SFDN:Gram} of Algorithm~\ref{alg:SFD}) 
and $GZ^\top$ and $GG^\top$ can be computed sparsely,
this decomposing step can be done efficiently too.
This modified Gram-Schmidt algorithm is presented in Algorithm~\ref{alg:Gram-Schmidt}
(which will also be used in sparse Oja's sketch),
where Line~\ref{alg:Gram-Schmidt:inner} is the key difference compared to standard Gram-Schmidt
(see Lemma~\ref{lem:Gram-Schmidt} below for a formal proof of correctness).

Line 3 of Algorithms~\ref{alg:eigen} and~\ref{alg:sparse_eigen} are exactly the same.
Finally the eigenvectors $U\left( \begin{array}{c}  V \\ Q \end{array} \right)$ in Algorithm~\ref{alg:eigen} now becomes
(with $U_1, U_2, Q_1, Q_2, N_1, N_2$ defined in Line 4 of Algorithm~\ref{alg:sparse_eigen})
\begin{align*}
U\left( \begin{array}{c}  FZ \\ QR \end{array} \right) &= (U_1, U_2) \left( \begin{array}{c}  FZ \\ QR \end{array} \right)
= U_1 FZ + U_2 (Q_1, Q_2) \left( \begin{array}{c}  Z \\ G \end{array} \right) \\
&= (U_1 FZ + U_2 Q_1) Z + U_2 Q_2 G = N_1 Z + N_2 G.
\end{align*}

Therefore, having the eigenvectors in the form $N_1 Z + N_2 G$, we can simply update $F$ as $N_1$
and $Z$ as $Z + N_1^{-1} N_2 G$ so that the invariant $V = FZ$ still holds 
(see Line~\ref{alg:SFDN:Z_update} of Algorithm~\ref{alg:SFD}).
The update of $Z$ is sparse since $G$ is sparse.

We finally summarize the results of this section in the following theorem.
\begin{theorem}
The average running time of Algorithm~\ref{alg:SFDN} is $\scO\bigl(m^2
+ ms\bigr)$ per round, and the regret bound is exactly the same as the
one stated in Theorem~\ref{thm:FD}.
\end{theorem}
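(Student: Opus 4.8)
The plan is to prove the two assertions separately: the exactness of the regret bound follows from an algebraic equivalence with the dense epoch-based \fdalg, while the $\scO(m^2+ms)$ bound follows from step-by-step accounting over Algorithms~\ref{alg:SFDN} and~\ref{alg:SFD}, amortizing the once-per-epoch eigendecomposition. Since the dense variant's regret is already established in Theorem~\ref{thm:FD}, the entire regret claim reduces to showing that the sparse implementation produces \emph{exactly} the same iterates $\bw_t$, hence the same predictions and losses.

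For the equivalence I would argue by induction on $t$, maintaining two invariants: (i) the factorization $V_t = F_t Z_t$ keeps $F_t Z_t$ orthonormal, and (ii) the split $\bw_t = \bbw_t + Z_{t-1}^\top \bb_t$ reconstructs the dense iterate. Invariant (ii) is exactly what the two displayed rewritings preceding the theorem accomplish: they show that the sparse updates of $\bbw$ and $\bb$ reproduce the dense Woodbury update $\bu_{t+1} = \bw_t - (\eye{d} - S_t^\top H_t S_t)\bg_t/\alpha$ together with its closed-form projection. Invariant (i), the correctness of the eigendecomposition under the $V=FZ$ representation, follows from Lemma~\ref{lem:eigen} combined with its sparse realization: Algorithm~\ref{alg:sparse_eigen} outputs the eigenvectors in the form $N_1 Z + N_2 G$, and the modified Gram--Schmidt of Algorithm~\ref{alg:Gram-Schmidt} (correct by Lemma~\ref{lem:Gram-Schmidt}) produces the same orthonormal $Q$ as the dense Algorithm~\ref{alg:eigen} while only operating through the Gram matrix $K$. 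Once the iterates coincide, Theorem~\ref{thm:FD} applies verbatim; the doubling of the sketch size to $2m$ in the epoch variant does not alter the guarantee, as noted when that variant was introduced.

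For the running time I would walk through the per-round operations. The prediction $\bbw_t^\top\bx_t + \bb_t^\top(Z_{t-1}\bx_t)$, the projection computing $\bxhat$, and the two weight reconstructions all reduce to applying the $m\times d$ matrices $Z$ and $G$ to the sparse vectors $\bx_t,\bg_t$, plus $\scO(m)$- and $\scO(m^2)$-sized dense manipulations, so each costs $\scO(m^2+ms)$. Within an epoch the sketch update merely inserts $\bghat$ into a row of $G$ and performs a rank-two Woodbury update of $H$, both $\scO(m^2+ms)$, while $Z$ (and hence $ZZ^\top$) stays fixed. The only expensive step occurs once every $m$ rounds: the Banach-space Gram--Schmidt on the $m\times 2m$ matrix $P$ using the inner product $\inner{\ba,\bb}=\ba^\top K\bb$ costs $\scO(m^3)$, eigendecomposing the reduced $(m+r)\times(m+r)$ matrix with $r\le m$ costs $\scO(m^3)$, and the sparse update $Z \leftarrow Z + N_1^{-1}N_2 G$ costs $\scO(m^3 + m^2 s)$ because $G$ carries only $\scO(ms)$ nonzeros over the epoch. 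Spreading this $\scO(m^3+m^2 s)$ epoch cost across the $m$ rounds yields an amortized $\scO(m^2+ms)$ per round, which is precisely what makes the deterministic FD sketch cheaper than Oja's, where a full orthonormalization is required every round.

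I expect the main obstacle to be invariant (i): confirming that representing the eigenvectors as $V_t=F_t Z_t$ and updating $Z$ only through the sparse matrix $G$ never corrupts either the orthonormality of $F_t Z_t$ or the exact numerical identity with the dense decomposition. This is where the interplay between Lemma~\ref{lem:eigen}, the Banach-space inner product defined by $K$, and the factorization $G - MV = PR$ with $P=(-MF,\,\eye{m})$ must be threaded carefully, so that the computed basis $N_1 Z + N_2 G$ is genuinely orthonormal and the induced update of $Z$ is supported on the sparse rows of $G$. Everything else amounts to bookkeeping on sparse-dense products and the epoch amortization.
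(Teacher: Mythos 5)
Your proposal is correct and takes essentially the same route as the paper: the regret claim is reduced to exact algebraic equivalence of the sparse iterates with the dense epoch-based FD variant (via the $V = FZ$ factorization and weight split, justified by Lemma~\ref{lem:eigen} and the Gram-matrix Gram--Schmidt of Lemma~\ref{lem:Gram-Schmidt}), and the running time follows by noting that all within-epoch operations cost $\scO(m^2+ms)$ while the once-per-epoch eigendecomposition costs $\scO(m^3+m^2s)$ and amortizes over $m$ rounds. The paper's own proof is exactly this development spread across Appendix~\ref{app:sparse}, so your plan reconstructs it faithfully.
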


\begin{algorithm}[t]
\caption{Sparse \alglong with Frequent Directions}
\label{alg:SFDN}
\begin{algorithmic}[1]
\Require Parameters $C$, $\alpha$ and $m$.
\State Initialize $\bbu = \bzero_{d \times 1}$, $\bb = \bzero_{m \times 1}$ and $(D, F, Z, G, H) \leftarrow \textbf{SketchInit}(\alpha, m)$ (Algorithm~\ref{alg:SFD}).
\State Let $S$ denote the matrix $\left( \begin{array}{c}  DFZ
  \\ G \end{array} \right)$ throughout the algorithm (without actually
computing it). 
\State Let $H_1$ and $H_2$ denote the upper and lower half of $H$,
i.e. $H = \left( \begin{array}{c}  H_1 \\ H_2 \end{array} \right)$. 
\For{$t=1$ {\bfseries to} $T$}

    \State Receive example $\bx_{t}$.
    \State Projection step: compute $\bxhat = S\bx_t$ and $\scale =
    \frac{\tau_C(\bbu^\top \bx_{t} + \bb^\top Z\bx_t)}{\bx_{t}^\top
      \bx_{t} - \bxhat^\top H \bxhat}$. 
      
      Obtain $\bbw = \bbu + \scale (G^\top H_2 \bxhat - \bx_t )$ and $\bb \leftarrow  \bb + \scale F^\top DH_1
    \bxhat$.  \label{alg:SFDN:projection} 
    \State Predict label $y_t = \bbw^\top \bx_t + \bb^\top Z\bx_t$ and
    suffer loss $\ell_t(y_t)$. 
    \State Compute gradient $\bg_t = \ell_t'(y_t) \bx_t$ and the
    to-sketch vector $\bghat = \sqrt{\sigma_t + \eta_t}\bg_t$. 
    \State $(D, F, Z, G, H, \Delta) \leftarrow \textbf{SketchUpdate}(\bghat)$ (Algorithm~\ref{alg:SFD}).
    \State Update $\bbu = \bbw + \frac{1}{\alpha}(G^\top H_2 S \bg - \bg) - G^\top\Delta^\top \bb$ and  
               $\bb \leftarrow \bb + \frac{1}{\alpha} F^\top DH_1 S\bg $. \label{alg:SFDN:weight_update}
\EndFor
\end{algorithmic}
\end{algorithm}

\begin{algorithm}[t]
\caption{Sparse Frequent Direction Sketch}
\label{alg:SFD}
\begin{algorithmic}[1]
\Internal $\tau, D, F, Z, G, H$ and $K$. 

\vspace{5pt}
\Init{$\alpha, m$}
\State Set $\tau = 1, D = \bzero_{m\times m}, F = K = \eye{m}, H = \tfrac{1}{\alpha} \eye{2m}, G = \bzero_{m\times d}$, and let $Z$ be any $m \times d$ matrix whose rows are orthonormal.
\State Return $(D, F, Z, G, H)$.

\vspace{5pt}
\setcounter{ALG@line}{0}
\Update{$\bghat$}
    \State Insert $\bghat$ into the $\tau$-th row of $G$.	

    \If{$\tau < m$}
	
	\State Let $\be$ be the $2m \times 1$ basic vector whose $(m + \tau)$-th entry is 1 and compute $\bq = S\bghat - \tfrac{\bghat^\top\bghat}{2}\be$.
	\State Update $H \leftarrow H - \frac{H \bq \be^\top H}{1 + \be^\top H \bq}$ and $H \leftarrow H - \frac{H \be \bq^\top H}{1 + \bq^\top H\be}$.                  
	\State Set $\Delta = \bzero_{m \times m}$.
         \State Set $\tau \leftarrow \tau + 1$.

    \Else
    
        \State $(N_1, N_2, \Sigma) \leftarrow \textbf{ComputeSparseEigenSystem} \left(\left( \begin{array}{c}  DFZ \\ G \end{array} \right), K\right)$ (Algorithm~\ref{alg:sparse_eigen}).
        \State Compute $\Delta = N_1^{-1} N_2$. 
        \State Update Gram matrix $K \leftarrow K + \Delta G Z^\top + ZG^\top \Delta^\top + \Delta G G^\top \Delta^\top $.  \label{alg:SFDN:Gram}
        \State Update $F = N_1, Z \leftarrow Z + \Delta G$, and let $D$ be such that $D_{i,i} = \sqrt{\Sigma_{i,i} - \Sigma_{m, m}}, \; \forall i \in [m]$. \label{alg:SFDN:Z_update}       
        \State Set $H \leftarrow \diag{\frac{1}{\alpha + D_{1,1}^2},  \cdots, \frac{1}{\alpha + D_{m,m}^2}, \frac{1}{\alpha}, \ldots, \frac{1}{\alpha}} $.        
        \State Set $ G = \bzero_{m \times d}$.
        \State Set $\tau = 1$.
    \EndIf	
    \State Return $(D, F, Z, G, H, \Delta)$.
\end{algorithmic}
\end{algorithm}

\begin{algorithm}[t]
\caption{ComputeSparseEigenSystem$(S, K)$}
\label{alg:sparse_eigen}
\begin{algorithmic}[1]
\Require $S = \left( \begin{array}{c}  DFZ \\ G \end{array} \right)$ and Gram matrix $K = ZZ^\top$.
\Ensure $N_1, N_2 \in \R^{m \times m}$  and diagonal matrix
$\Sigma \in \R^{m \times m}$ such that the $i$-th row of $N_1 Z + N_2
G$ and the $i$-th entry  of the diagonal of $\Sigma$ are the $i$-th
eigenvector and eigenvalue of the matrix $S^\top S$. 
\State Compute $M = GZ^\top F^\top$.
\State $(L, Q) \leftarrow
\text{Decompose}\left(\left(\begin{array} {cc} -MF &
  \eye{m} \end{array} \right), \left(\begin{array} {cc} K & ZG^\top
  \\ GZ^\top & GG^\top \end{array}\right) \right)$
(Algorithm~\ref{alg:Gram-Schmidt}).  \label{alg:eigen:decompose}
\State Let $r$ be the number of columns of $L$. Compute the top $m$ eigenvectors ($U \in \R^{m \times (m + r)}$) and eigenvalues ($\Sigma \in \R^{m \times m }$) of the matrix
$\left( \begin{array}{cc}  D^2 & \bzero_{m \times r} \\ \bzero_{r
    \times m} & \bzero_{r \times r} \end{array} \right) +
\left( \begin{array}{c}  M^\top \\ L^\top \end{array} \right)
\left( \begin{array}{cc}  M & L \end{array} \right) $. 
\State Set $N_1 = U_1 F + U_2 Q_1$ and $N_2 = U_2 Q_2$
where $U_1$ and $U_2$ are the first $m$ and last $r$ columns of $U$
respectively, and $Q_1$ and $Q_2$ are the left and right half of $Q$
respectively. 
\State Return $(N_1, N_2, \Sigma)$. 
\end{algorithmic}
\end{algorithm}

\begin{lemma}\label{lem:Gram-Schmidt}
The output of Algorithm~\ref{alg:Gram-Schmidt} ensures that $LQR = PR$ and
the rows of $QR$ are orthonormal. 
\end{lemma}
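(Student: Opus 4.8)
The plan is to exploit the elementary but crucial observation that right-multiplication by $R$ is an isometry from the coefficient space $\R^{2m}$, equipped with the semi-inner-product $\inner{\ba,\bb}_K := \ba^\top K \bb$, into $\R^d$ with the ordinary Euclidean inner product. Indeed, since $K = RR^\top$ by hypothesis, for any $\ba,\bb \in \R^{2m}$ we have $\ba^\top K \bb = \ba^\top R R^\top \bb = (\ba^\top R)(\bb^\top R)^\top$, so the $K$-product of two coefficient vectors equals the Euclidean inner product of the two rows they generate after multiplication by $R$. Consequently, running Gram--Schmidt on the rows $\bp_1,\dots,\bp_m$ of $P$ with the modified inner product in Line~\ref{alg:Gram-Schmidt:inner} produces exactly the scalar coefficients that ordinary Gram--Schmidt would produce on the rows of $PR$; the algorithm never forms $PR$, yet computes the identical orthogonalization implicitly. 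In short, the claim is that Gram--Schmidt \emph{commutes} with the isometry $\ba \mapsto \ba^\top R$.

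I would make this precise by induction on the processed rows, maintaining the row-wise invariant $\bp_i^\top R = \sum_{j} L_{i,j}\,\tilde\bq_j^\top R$, where the $\tilde\bq_j$ are the retained normalized residuals stored as rows of $Q$ and $L_{i,j} = \inner{\bp_i,\tilde\bq_j}_K$. At step $i$ the algorithm forms the residual $\bq_i = \bp_i - \sum_{j<i}\inner{\bp_i,\tilde\bq_j}_K\,\tilde\bq_j$, and I would first record the standard Gram--Schmidt facts that $\inner{\bq_i,\tilde\bq_j}_K = 0$ for each retained $j$ and that the retained rows satisfy $\inner{\tilde\bq_j,\tilde\bq_{j'}}_K = \delta_{jj'}$. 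The orthonormality of $QR$ then falls out at once: the $(j,j')$ entry of $(QR)(QR)^\top = Q R R^\top Q^\top = QKQ^\top$ is precisely $\inner{\tilde\bq_j,\tilde\bq_{j'}}_K = \delta_{jj'}$, so the rows of $QR$ are orthonormal in the usual sense. Stacking the row-wise invariant over $i = 1,\dots,m$ gives $PR = LQR$.

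The one place needing care, and what I expect to be the main obstacle, is the rank-deficient case where $K$ is only positive \emph{semi}definite. Then $\inner{\cdot,\cdot}_K$ is a genuine semi-inner-product whose kernel is $\{\ba : \ba^\top R = \bzero\}$, so a residual $\bq_i$ can be nonzero in $\R^{2m}$ while $\inner{\bq_i,\bq_i}_K = 0$; this is exactly the case in which the algorithm discards the direction instead of normalizing it, leaving $Q$ with $r \le m$ rows. The subtlety is that $P = LQ$ may then \emph{fail} as an identity in $\R^{2m}$, so I must argue the weaker identity $PR = LQR$ directly. This works because $\norm{\bq_i^\top R}_2^2 = \bq_i^\top K \bq_i = 0$ forces $\bq_i^\top R = \bzero$, so omitting $\bq_i$ leaves every row unchanged after multiplication by $R$ and the row-wise invariant is preserved. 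Phrasing the whole induction at the level of $PR$ rather than $P$ both sidesteps the degeneracy and delivers exactly the two claimed conclusions, with $r$ automatically equal to the rank of $QR$.
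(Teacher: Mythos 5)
Your proposal is correct and is essentially the paper's own argument: both hinge on the observation that $K = RR^\top$ makes $\ba \mapsto \ba^\top R$ an isometry from the $K$-(semi-)inner-product space to the Euclidean one, so the modified Gram--Schmidt run on $(P,K)$ coincides with standard Gram--Schmidt run on $PR$, and both proceed by induction over the algorithm's iterations. The only difference is packaging: the paper couples the run on $(P,K)$ with a run on $(PR,\eye{n})$ via the invariants $\tilde{Q}=QR$ and $\tilde{L}=L$ and then inherits correctness from standard Gram--Schmidt, whereas you verify $PR=LQR$ and $QKQ^\top=\eye{r}$ directly and make the discarded-direction case $c=0$ explicit (the paper handles it implicitly, since $\tilde{c}=c$ forces both runs to take the same branch).
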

\begin{proof}
It suffices to prove that Algorithm~\ref{alg:Gram-Schmidt} is exactly the same as using the standard Gram-Schmidt 
to decompose the matrix $PR$ into $L$ and an orthonormal matrix which can be written as $QR$.
First note that when $K = \eye{n}$, Algorithm~\ref{alg:Gram-Schmidt} is simply the standard Gram-Schmidt algorithm applied to $P$.
We will thus go through Line 1-10 of Algorithm~\ref{alg:Gram-Schmidt} with $P$ replaced by $PR$ and $K$ by $\eye{n}$ 
and show that it leads to the exact same calculations as running Algorithm~\ref{alg:Gram-Schmidt} directly.
For clarity, we add ``$\;\tilde{}\;$'' to symbols to distinguish the two cases (so $\tilde{P} = PR$ and $\tilde{K} = \eye{n}$).
We will inductively prove the invariance $\tilde{Q}  = QR$ and $\tilde{L} = L$.
The base case $\tilde{Q}  = QR = \bzero$ and $\tilde{L} = L = \bzero$ is trivial.
Now assume it holds for iteration $i - 1$ and consider iteration $i$. We have
\[
\tilde{\balpha} = \tilde{Q}\tilde{K}\tilde{\bp} = QRR^\top\bp = QK\bp = \balpha,
\]
\[
\tilde{\bbeta} = \tilde{\bp} - \tilde{Q}^\top \tilde{\balpha} = R^\top\bp - (QR)^\top\balpha 
= R^\top (\bp - Q^\top\balpha) = R^\top \bbeta,
\]
\[
\tilde{c} = \sqrt{\tilde{\bbeta}^\top \tilde{K} \tilde{\bbeta}} = \sqrt{(R^\top \bbeta)^\top (R^\top \bbeta)} 
=  \sqrt{\bbeta^\top K \bbeta} = c,
\]
which clearly implies that after execution of Line 5-9, we again have $\tilde{Q}  = QR$ and $\tilde{L} = L$, finishing the induction.
\end{proof}

\section{Details for sparse Oja's algorithm}\label{app:sparse-oja}

We finally provide the missing details for the sparse version of the
Oja's algorithm. Since we already discussed the updates for $\bbw_t$
and $\bb_t$ in Section~\ref{sec:sparse}, we just need to describe how
the updates for $F_t$ and $Z_t$ work. Recall that the dense Oja's updates
can be written in terms of $F$ and $Z$ as
\begin{equation}
\label{eq:Oja}
\begin{split}
\Lambda_{t} &= (\eye{m} - \Gamma_t) \Lambda_{t-1} + \Gamma_t \;\diag{F_{t-1}Z_{t-1} \bghat_t}^2 \\
F_t Z_t &\xleftarrow{\text{orth}} F_{t-1}Z_{t-1} + \Gamma_t F_{t-1}Z_{t-1} \bghat_t \bghat_t^\top
= F_{t-1} (Z_{t-1} +  F_{t-1}^{-1}\Gamma_t F_{t-1}Z_{t-1} \bghat_t \bghat_t^\top)~.
\end{split}
\end{equation}
Here, the update for the eigenvalues is straightforward.  For the
update of eigenvectors, first we let $Z_t = Z_{t-1} + \bdelta_t
\bghat_t^\top$ where $\bdelta_t = F_{t-1}^{-1}\Gamma_t F_{t-1}Z_{t-1}
\bghat_t$ (note that under the assumption of
Footnote~\ref{fn:full_rank}, $F_t$ is always invertible).  Now it is
clear that $Z_t - Z_{t-1}$ is a sparse rank-one matrix and the update
of $\bbu_{t+1}$ is efficient.  Finally it remains to update $F_t$ so
that $F_t Z_t$ is the same as orthonormalizing $F_{t-1} Z_t$, which
can in fact be achieved by applying the Gram-Schmidt algorithm to
$F_{t-1}$ in a Banach space where inner product is defined as $\langle
\ba, \bb \rangle = \ba^\top K_t \bb$ where $K_t$ is the Gram matrix
$Z_t Z_t^\top$ (see Algorithm~\ref{alg:Gram-Schmidt}).  Since we can
maintain $K_t$ efficiently based on the update of $Z_t$:
\[
K_t = K_{t-1} + \bdelta_t \bghat_t^\top Z_{t-1}^\top + Z_{t-1}
\bghat_t \bdelta_t^\top + (\bghat_t^\top \bghat_t) \bdelta_t
\bdelta_t^\top,  
\]
the update of $F_t$ can therefore be implemented in $\scO(m^3)$ time.

\begin{algorithm}[t]
\caption{Decompose(P, K)}
\label{alg:Gram-Schmidt}
\begin{algorithmic}[1]
\Require $P \in \R^{m \times n}$,  $K \in \R^{m\times m}$ such that $K$
is the Gram matrix $K = RR^\top$ for some matrix $R \in \R^{n \times
  d}$ where $n \geq m, d \geq m$, 
\Ensure $L \in \R^{m \times r}$ and $Q \in \R^{r \times n}$
such that $LQR = PR$ where $r$ is the rank of $PR$ and
the rows of $QR$ are orthonormal. 
\State Initialize $L = \bzero_{m \times m}$ and $Q = \bzero_{m \times n}$.
\For{$i=1$ {\bfseries to} $m$}
    \State Let $\bp^\top$ be the $i$-th row of $P$.
    \State Compute $\balpha = Q K \bp, \bbeta = \bp - Q^\top \balpha$ and $c = \sqrt{\bbeta^\top K \bbeta}$. \label{alg:Gram-Schmidt:inner}
    \If{$c \neq 0$}
    	\State Insert $\frac{1}{c}\bbeta^\top$ to the $i$-th row of $Q$.
    \EndIf
    \State Set the $i$-th entry of $\balpha$ to be $c$ and insert $\balpha$ to the $i$-th row of $L$.
\EndFor
\State Delete the all-zero columns of $L$ and all-zero rows of $Q$.
\State Return $(L, Q)$.
\end{algorithmic}
\end{algorithm}

\section{Experiment Details}\label{app:experiment}
This section reports some detailed experimental results omitted from Section~\ref{subsec:real_data}.
Table~\ref{tab:datasets} includes the description of benchmark datasets;
Table~\ref{tab:error2} reports error rates on relatively small datasets to show that \ojaalg generally has better performance;
Table~\ref{tab:error} reports concrete error rates for the experiments described in Section~\ref{subsec:real_data};
finally Table~\ref{tab:eigen} shows that Oja's algorithm estimates the eigenvalues accurately. 

As mentioned in Section~\ref{subsec:real_data}, we see substantial improvement for the {\it splice} dataset
when using Oja's sketch even after the diagonal adaptation. 
We verify that the condition number for this dataset before and after the diagonal adaptation are very close 
(682 and 668 respectively), explaining why a large improvement is seen using Oja's sketch.
Fig.~\ref{fig:splice} shows the decrease of error rates as \ojaalg with different sketch sizes sees more examples.
One can see that even with $m=1$ \ojaalg already performs very well.
This also matches our expectation since there is a huge gap between the top and second eigenvalues 
of this dataset ($50.7$ and $0.4$ respectively).

\begin{table}[t] 
\caption{Datasets used in experiments}\label{tab:datasets}
\vskip 10pt
\begin{center}
\begin{tabular}{|c||r|r|r|}
\hline
Dataset & \#examples & avg. sparsity & \#features \\
\hline
\hline
20news &  18845  &  93.89  & 101631  
\\
a9a &  48841  &  13.87  & 123  
\\
acoustic & 78823 & 50.00 & 50
\\
adult &  48842  &  12.00  & 105  
\\
australian &  690 & 11.19 & 14
\\
breast-cancer & 683 & 10.00 & 10
\\
census &  299284  &  32.01  & 401  
\\
cod-rna & 271617 & 8.00 & 8
\\
covtype &  581011  &  11.88  & 54  
\\
diabetes & 768 & 7.01 & 8 
\\
gisette & 1000 & 4971.00 & 5000
\\
heart & 270 & 9.76 & 13
\\
ijcnn1 & 91701 & 13.00 & 22
\\
ionosphere & 351 & 30.06 & 34
\\
letter &  20000  &  15.58  & 16  
\\
magic04 & 19020 & 9.99 & 10
\\
mnist & 11791 & 142.43 & 780
\\
mushrooms &  8124 & 21.00 & 112
\\
rcv1&  781265  &  75.72  & 43001  
\\
real-sim & 72309 & 51.30 & 20958
\\
splice & 1000 & 60.00 & 60
\\
w1a & 2477 & 11.47 & 300
\\
w8a & 49749 &11.65   &  300 
\\
\hline
\end{tabular}
\end{center}
\end{table}

\begin{table}[t] 
\caption{Error rates for Sketched Online Newton with different sketching algorithms}\label{tab:error2}
\vskip 10pt
\begin{center}
\begin{tabular}{|c||r|r|}
\hline
Dataset &  FD-SON & Oja-SON \\
\hline
australian &  16.0 & 15.8 \\
\hline
breast-cancer  & 5.3 & \bf{3.7} \\
\hline
diabetes  & 35.4 & \bf{32.8} \\
\hline
mushrooms  & 0.5 & \bf{0.2} \\
\hline
splice   & \bf{22.6} & 22.9 \\
\hline
\end{tabular}
\end{center}
\end{table}

\begin{table}[t] 
\caption{Error rates for different algorithms (with best results bolded)}\label{tab:error}
\vskip 10pt
\begin{center}
\begin{tabular}{|c||c|c|c|c|c|}
\hline
\multirow{3}{*}{Dataset} &\multicolumn{4}{c|}{\ojaalg}  &  \multirow{3}{*}{\textsc{AdaGrad}} \\
\cline{2-5}
& \multicolumn{2}{c|}{Without Diagonal Adaptation} & \multicolumn{2}{c|}{With Diagonal Adaptation} & \\
\cline{2-5}
& $m=0$ & $m=10$ & $m=0$ & $m=10$ & \\
\hline
\hline
20news &  0.121338 & 0.121338 & \bf{0.049590} & \bf{0.049590} & 0.068020
\\
a9a &  0.204447  &  0.195203  & \bf{0.155953} & \bf{0.155953} & 0.156414
\\
acoustic & 0.305824 & 0.260241 & \bf{0.257894} & \bf{0.257894} & 0.259493
\\
adult &  0.199763  &  0.199803  & \bf{0.150830}  & \bf{0.150830} & 0.181582
\\
australian &  0.366667 & 0.366667 & 0.162319 & \bf{0.157971} & 0.289855
\\
breast-cancer & 0.374817 & 0.374817 & \bf{0.036603} & \bf{0.036603} & 0.358712
\\
census &  0.093610  &  0.062038  & 0.051479  & \bf{0.051439} & 0.069629
\\
cod-rna & 0.175107 & 0.175107 & 0.049710 & \bf{0.049643} & 0.081066
\\
covtype &  \bf{0.042304}  &  \bf{0.042312}  & 0.050827  & 0.050818 & 0.045507
\\
diabetes & 0.433594 & 0.433594 & 0.329427 & \bf{0.328125} & 0.391927
\\
gisette & 0.208000 & 0.208000 & \bf{0.152000} & \bf{0.152000} & 0.154000
\\
heart & 0.477778 & 0.388889 & \bf{0.244444} & \bf{0.244444} & 0.362963
\\
ijcnn1 & 0.046826 & 0.046826 & \bf{0.034536} & \bf{0.034645} & 0.036913
\\
ionosphere & 0.188034 & \bf{0.148148} & 0.182336 & 0.182336 & 0.190883
\\
letter &  0.306650  &  0.232300  & 0.233250  & \bf{0.230450} & 0.237350
\\
magic04 & 0.000263 & 0.000263 & \bf{0.000158} & \bf{0.000158} & 0.000210
\\
mnist & 0.062336 & 0.062336 & 0.040031& \bf{0.039182} & 0.046561
\\
mushrooms &  0.003323 & 0.002339 & 0.002462 & 0.002462 & \bf{0.001969}
\\
rcv1&  0.055976  &  0.052694  & 0.052764  & 0.052766 & \bf{0.050938}
\\
real-sim & 0.045140 & 0.043577 & \bf{0.029498} & \bf{0.029498} & 0.031670
\\
splice & 0.343000 & 0.343000 & 0.294000 & \bf{0.229000} & 0.301000
\\
w1a & 0.001615 & 0.001615 & 0.004845 & 0.004845 & \bf{0.003633}
\\
w8a & \bf{0.000101} & \bf{0.000101}  &  0.000422  & 0.000422 & 0.000221
\\
\hline
\end{tabular}
\end{center}
\end{table}

\begin{table}[t] 
\caption{Largest relative error between true and estimated top 10
  eigenvalues using Oja's rule.}\label{tab:eigen}
\vskip 10pt
\begin{center}
\begin{tabular}{|c|r|}
\hline
Dataset & \specialcell{Relative eigenvalue \\ difference} \\
\hline
\hline
a9a & 0.90 \\
australian & 0.85 \\
breast-cancer & 5.38 \\
diabetes & 5.13 \\
heart & 4.36 \\
ijcnn1 &  0.57 \\
magic04 & 11.48 \\
mushrooms & 0.91 \\
splice & 8.23 \\
w8a & 0.95 \\
\hline
\end{tabular}
\end{center}
\end{table}

\begin{figure}[t]
\centering
 \includegraphics[width=.6\textwidth]{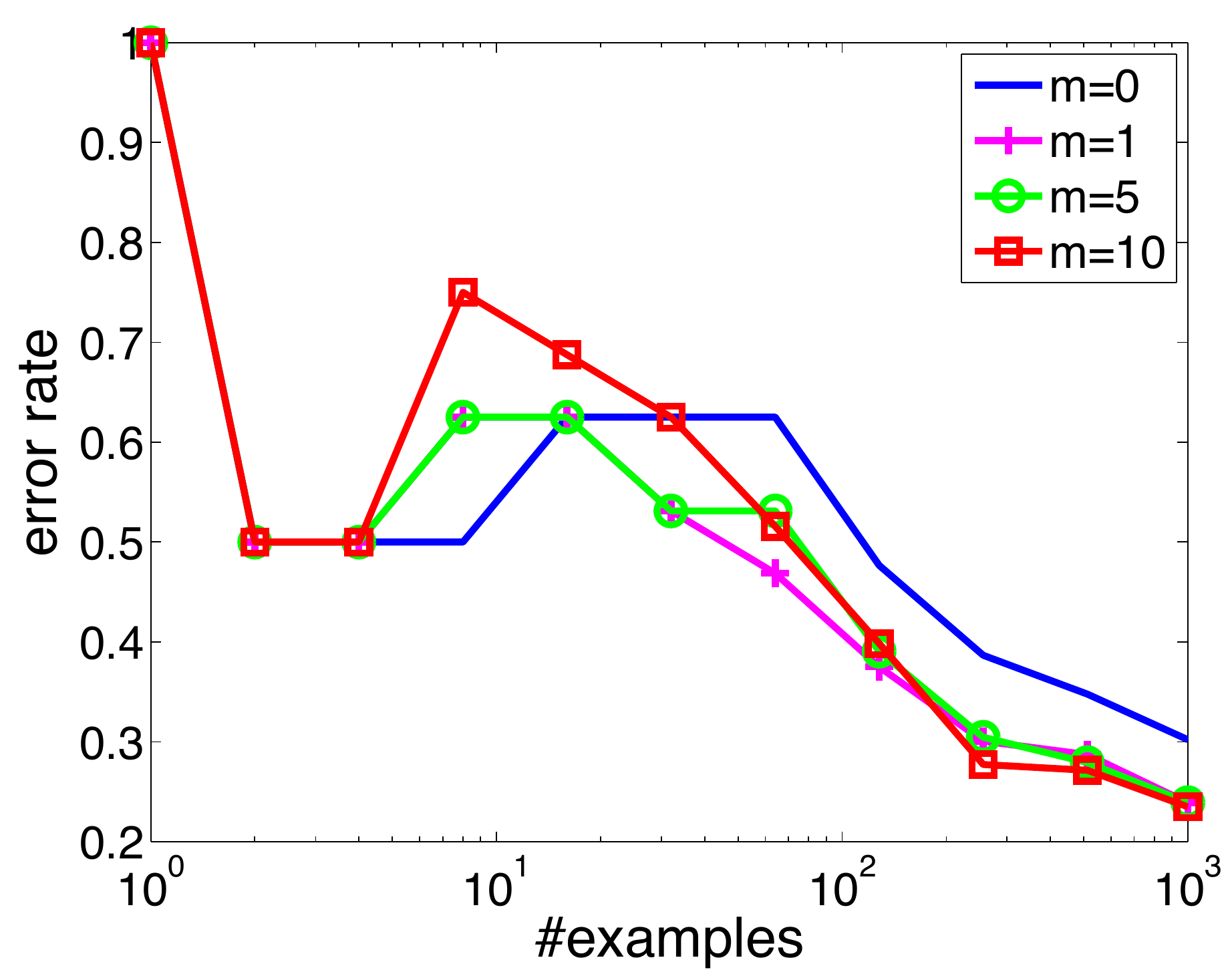}
\caption{Error rates for \ojaalg with different sketch sizes on splice dataset}
\label{fig:splice}
\end{figure}

\end{document}